\documentclass[10pt,journal,compsoc]{IEEEtran}

\usepackage{hyperref}
\usepackage{courier}
\usepackage{epsfig}
\usepackage{graphicx}
\usepackage{amsmath}
\usepackage{mathtools}
\usepackage{amsthm}
\usepackage{amssymb}
\usepackage{amscd}
\usepackage{times}
\usepackage{subfigure} 
\usepackage{color}
\usepackage{enumerate}
\usepackage{epstopdf}

\newtheorem{thm}{Theorem}
\newtheorem{lem}{Lemma}
\newtheorem{prp}{Proposition}
\newtheorem{dfn}{Definition}

\def\0{\boldsymbol{0}}
\def\1{\boldsymbol{1}}
\def\b{\boldsymbol{b}}
\def\c{\boldsymbol{c}}
\def\e{\boldsymbol{e}}

\def\x{\boldsymbol{x}}
\def\y{\boldsymbol{y}}
\def\v{\boldsymbol{v}}

\def\bxi{\boldsymbol{\xi}}

\def\bomega{\boldsymbol{\omega}}

\def\w{\boldsymbol{w}}

\def \E{\mathcal{E}}

\def\bP{\boldsymbol{P}}

\def\cS{\mathcal{S}}

\def\V{\mathcal{V}}

\def\bY{\boldsymbol{Y}}

\def\bW{\boldsymbol{W}}

\def\bX{\boldsymbol{X}}
\def\transpose{\top} 

\DeclareMathOperator*{\Span}{Span}

\DeclareMathOperator*{\diag}{diag}

\renewcommand{\Re}{{\mathbb{R}}}

\DeclarePairedDelimiter\norm{\lVert}{\rVert}
\DeclarePairedDelimiter\abs{\lvert}{\rvert}

\title{\LARGE \bf Theoretical Analysis of Sparse Subspace Clustering \\ with Missing Entries}

\author{Manolis C. Tsakiris \, \, \,  \, \, \, Ren\'e Vidal
\thanks{Manolis C. Tsakiris is with the School of Information Science and Technology, ShanghaiTech University, Shanghai, China. email: mtsakiris@shanghaitech.edu.cn} 

\thanks{Ren\'e Vidal is with the Department of Biomedical Engineering, Johns Hopkins University, Baltimore, USA. email: rvidal@cis.jhu.edu}}

\begin{document}

\maketitle

\begin{abstract}
Sparse Subspace Clustering (SSC) is a popular unsupervised machine learning
method for clustering data lying close to an unknown union of low-dimensional linear 
subspaces; a problem with numerous applications in pattern recognition and
computer vision. Even though the behavior of SSC for complete data is by now well-understood, little is known about its theoretical properties when applied to data with missing entries. In this paper we give theoretical guarantees for SSC with incomplete data, and analytically establish that projecting the zero-filled data onto the observation pattern of the point being expressed leads to a substantial improvement in performance. The main insight that stems from our analysis is that even though the projection induces additional missing entries, this is counterbalanced by the fact that the projected and zero-filled data are in effect incomplete points associated with the union of the corresponding projected subspaces, with respect to which the point being expressed is complete. The significance of this phenomenon potentially extends to the entire class of self-expressive methods. 
\end{abstract}

\begin{IEEEkeywords}
Generalized PCA, Sparse Subspace Clustering, Missing Entries, Incomplete Data, Lasso.
\end{IEEEkeywords}

\section{INTRODUCTION} \label{section:Introduction}

\IEEEPARstart{C}{lustering} data lying close to an unknown union of low-dimensional linear subspaces is a fundamental problem in unsupervised machine learning, known as \emph{Subspace Clustering} or \emph{Generalized Principal Component Analysis} \cite{Vidal:GPCAbook}. Indeed, this problem is intimately related to the extension of the classical Principal Component Analysis (PCA) to multiple subspaces, and in recent years has found numerous applications in machine learning, computer vision, pattern recognition, bioinformatics and systems theory. Moreover, recent work is beginning to explore connections between subspace clustering and deep learning, with the goal of learning unions of low-dimensional non-linear manifolds \cite{Peng:IJCAI16}. 
 
Among a variety of subspace clustering methods \cite{Vidal:GPCAbook} including algebraic \cite{Vidal:PAMI05,Tsakiris:SIAM17,Tsakiris:AffinePAMI17}, iterative \cite{Bradley:JGO00}, recursive \cite{RANSAC,Tsakiris:ICML17}, and spectral \cite{Sekmen:ACHA17,Heckel:TIT15,Lu:ECCV12,Chen:IJCV09} techniques, Sparse Subspace Clustering (SSC) \cite{Elhamifar:CVPR09, Elhamifar:TPAMI13} is one of the most popular methods. The reason is that it exhibits a very competitive performance in real-world datasets, it admits efficient numerical implementations, and is supported by a rich body of theory \cite{Elhamifar:TPAMI13,Soltanolkotabi:AS12,Wang:JMLR16,Soltanolkotabi:AS14}. In addition, SSC is able to cluster data from incomplete observations reasonably well \cite{Yang:ICML15}, which is an important problem    \cite{Ongie:ICML17,Alarcon:ICML16,Elhamifar:NIPS16,Yang:ICML15,Heckel:TIT15,Alarcon-ISIT15,Eriksson:AISTATS12,Recht:JMLR11,Balzano:ISIT10}, since in many applications not all features are available for every data point: Users of recommendation systems only rate a few items, medical patients undergo only a few tests and treatments, dynamic processes are observed across short time intervals and so on.

Even though the theoretical foundations of SSC are by now mature, there are many lingering open questions. For example, it is still unclear whether weaker conditions exist for the performance of SSC even for uncorrupted data; contrast this to the recent study \cite{You:ICML15}, which establishes a hierarchy of such conditions for sparse subspace recovery. More importantly, even though a satisfactory theory for SSC with general noise does exist \cite{Wang:JMLR16}, the theoretical properties of SSC for data with missing entries remain elusive, with \cite{Wang:ISIT16,Charles:arXiv18} being recent efforts towards that end. In particular, the conditions of \cite{Wang:ISIT16} are hard to interpret, and in addition they refer to the formulation of SSC with exact self-expressiveness equality constraint; not an optimal choice for corrupted data. On the other hand, through a different methodology the work of \cite{Charles:arXiv18} provides bounds similar to a subset of the results here\footnote{We note that an earlier version of \cite{Charles:arXiv18}, namely \cite{Charles:arXiv17}, studied an $\ell_1$ formulation of SSC. We pointed out to the authors that the arguments between (A.11) and (A.12) were incorrect, which prompted them to the study of a Lasso version of SSC with missing entries, what we call in this paper ZF-SSC.}. 

In this paper we provide a novel theoretical analysis of SSC for incomplete data. More precisely, we provide theoretical performance guarantees for SSC applied to i) \emph{Zero-Filled} data (ZF-SSC), in which case all unobserved entries are filled with zeros, and ii) \emph{Projected-Zero-Filled} data (PZF-SSC), in which case all unobserved entries are filled with zeros and in addition all data points are projected onto the observation pattern of the point being expressed each time\footnote{This is called \emph{EWZF-SSC} in \cite{Yang:ICML15}; here we have taken the liberty to rename the method according to the more suggestive name PZF-SSC.}. A direct comparison of the tolerable bounds of missing entries for ZF-SSC (Theorem \ref{thm:ZF-probabilistic}) and PZF-SSC (Theorem \ref{thm:PZF-probabilistic}) serves as a theoretical justification for the former being a better method than the latter, which is in agreement with existing experimental evidence \cite{Yang:ICML15}. Since PZF data have in principle more missing entries than ZF data, this is a remarkable phenomenon, of potentially wider significance to the entire class of self-expressive-based methods, e.g., \cite{Liu:TPAMI13,Lu:ECCV12,Elhamifar:TPAMI13,Wang:NIPS13-LRR+SSC,You:CVPR16-EnSC}. 

Our analysis has the distinctive feature (say as compared to \cite{Wang:JMLR16}) of not decoupling the noise from the data until the very end, and also of making use of a novel observation (Lemma \ref{lem:New-vUB}) about the Euclidean norm of the so-called \emph{dual vector}. This, is not only a convenient tool for the analysis of SSC with incomplete data, but when applied to data with generic noise, it leads to higher tolerable levels of noise (Theorem \ref{thm:SSCgenericNoise}) than those in \cite{Wang:JMLR16}, and when applied to uncorrupted data, it leads to an even weaker and more easily computable condition (Theorem \ref{thm:SSCuncorrupted}) than that in \cite{Soltanolkotabi:AS12}.

The rest of the paper is organized as follows. In \S \ref{subsection:Notation} we introduce the notation and main objects of this paper. In \S \ref{section:SSCreview} we review SSC for uncorrupted data, and discuss the two known elementary formulations of SSC for incomplete data, i.e., ZF-SSC and PZF-SSC. In \S \ref{section:SSCincomplete} we present the main contributions of this paper, which consist of deterministic and probabilistic characterizations of the tolerable percentage of missing entries for ZF-SSC and PZF-SSC, as well as a formal comparison between the two methods. 
In \S \ref{section:OtherImplications} we present new results for SSC with uncorrupted data or with data corrupted by generic noise. All proofs are organized in \S \ref{section:Proofs}, while the appendix contains necessary technical facts used throughout the analysis.

\subsection{Notation and Main Objects} \label{subsection:Notation}

The nature of the problem studied in this paper calls for a rather heavy notation, which we have strived to simplify and unify as much as possible. To avoid introducing complicated notation amidst other technical developments, we have found it convenient to gather all relevant objects in Definition \ref{dfn:BasicObjects}, which the reader is encouraged to refer to when necessary. Other than that, for $\ell$ a positive integer, we define 
$[\ell]:=\{1,\dots,\ell\}$. For a vector $\w \in \Re^D$ we define $\hat{\w} := \w / \norm{\w}_2$, if $\w \neq \0$, and $\hat{\w} := \0$, otherwise. For $\V$ any linear subspace of $\Re^D$, we denote by $\bP_{\V}$ the square matrix that represents the orthogonal projection of $\Re^D$ onto $\V$. Given a binary relation, RHS stands for \emph{Right-Hand-Side}, and similarly for LHS. Finally, $\langle \cdot,\cdot \rangle$ is the standard inner product.

\begin{dfn}\label{dfn:BasicObjects} We define the following objects:  
\begin{enumerate}
\item {\bf The linear subspaces:} For $i \in [n]$, we let $\cS_i$ be a linear subspace of $\Re^D$, where $\dim \cS_i = d_i<D$.

\item {\bf The complete data:} With an abuse of notation, we let 
\begin{align}
\bX = [\bX^{(1)},\dots,\bX^{(n)}] \boldsymbol{\Gamma} \in \Re^{D \times N}
\end{align} denote a data matrix as well as a set (formed by the columns of this matrix) of unit $\ell_2$-norm points in the union of the linear subspaces $\cS_i, \, i \in [n]$, where $\bX^{(i)} = [\x_1^{(i)}, \dots, \x_{N_i}^{(i)}] \subset \cS_i$, $\Span(\bX^{(i)}) = \cS_i$, and $\boldsymbol{\Gamma}$ is an unknown permutation, indicating that the clustering of the points with respect to the subspaces is unknown. We define $\bX_{-1}^{(1)} := \bX^{(1)} \setminus \{ \x_1^{(1)}\}$, $\bX_{-1} := \bX \setminus \{ \x_1^{(1)}\}$, and $\bX^{(-1)} := \bX \setminus \bX^{(1)}$, where $\setminus$ denotes set-theoretic difference.

\item {\bf The pattern of missing entries:} For every point $\x_j^{(i)} \in \Re^D$ we consider an observation pattern $\bomega_{j}^{(i)} \in \{0,1 \}^{D}$, where a value of $1$ indicates an observed entry, while a value of $0$ indicates an unobserved entry. We assume each $\bomega_{j}^{(i)}$ has precisely $m$ zeros.  We let $\tilde{\bomega}_{j}^{(i)} := \1 - \bomega_{j}^{(i)}$, where $\1$ is the vector of all ones.

\item {\bf The observed/unobserved coordinate subspaces:} We let $\bar{\E}_j^{(i)} := \Span \{\e_k: \,  \e_k^\transpose \bomega_j^{(i)} \neq 0 \}$, with $\e_k$ the canonical vector of $\Re^D$ with zeros everywhere and a $1$ at position $k$. The orthogonal projection onto $\bar{\E}_j^{(i)}$ is given by 
$\bar{\bP}_j^{(i)}:= \diag(\bomega_j^{(i)})$, the matrix with $\bomega_j^{(i)}$ on its diagonal and zeros everywhere else. $\tilde{\E}_j^{(i)}$ is the orthogonal complement of $\bar{\E}_j^{(i)}$, and $\tilde{\bP}_j^{(i)} = \diag(\tilde{\bomega}_j^{(i)})$ is the orthogonal projection onto $\tilde{\E}_j^{(i)}$. 

\item {\bf The zero-filled data (ZF-data):} We let $\bar{\bX} \in \Re^{D \times N}$ be the data $\bX$ with zeros appearing in the unobserved entries, i.e., the column of $\bar{\bX}$ associated to point $\x_j^{(i)}$ is $\bar{\x}_j^{(i)}:=\bar{\bP}_j^{(i)}\x_j^{(i)}, \, \forall i,j$.

\item {\bf The projected data:} We let $\dot{\bX}:= \bar{\bP}_1^{(1)} \bX$ be the projection of the data $\bX$ onto the observed coordinate subspace $\bar{\E}_1^{(1)}$ associated to point $\x_1^{(1)}$. The column of $\dot{\bX}$ associated to $\x_j^{(i)}$ is $\dot{\x}_j^{(i)}:=\bar{\bP}_1^{(1)}\x_j^{(i)}, \, \forall i,j$. 

\item {\bf The projected and zero-filled data (PZF-data):} We let $\dot{\bar{\bX}}$ be the projection of the zero-filled data onto $\bar{\E}_1^{(1)}$, i.e., $\dot{\bar{\bX}}:=\bar{\bP}_1^{(1)} \bar{\bX}$. The column of $\dot{\bar{\bX}}$ associated to point $\x_j^{(i)}$ is $\dot{\bar{\x}}_j^{(i)}:=\bar{\bP}_1^{(1)}\bar{\x}_j^{(i)}, \, \forall i,j$. 

\item {\bf The unobserved data:} We define $\tilde{\bX}$ to be the unobserved components of the data, i.e., $\tilde{\bX}:=\bX - \bar{\bX}$, and $\tilde{\x}_j^{(i)}:=\tilde{\bP}_1^{(1)}\x_j^{(i)}, \, \forall i,j$. Similarly, for PZF data we define $\dot{\tilde{\bX}}:=\dot{\bX} - \dot{\bar{\bX}}$, and $\dot{\tilde{\x}}_j^{(i)}:=\tilde{\bP}_1^{(1)}\bar{\x}_j^{(i)}, \, \forall i,j$.

\item {\bf The projected subspaces:} For $i \in [n]$, we let $\dot{\cS}_i \subset \Re^D$ be the orthogonal projection of $\cS_i$ onto the subspace $\bar{\mathcal{E}}_1^{(1)}$. In other words, if $\b_1^{(i)},\dots,\b_{d_i}^{(i)}$ is a basis for $\cS_i$, then $\dot{\cS}_i$ is the subspace of $\Re^D$ spanned by the vectors $ \bar{\bP}_1^{(1)} \b_k^{(i)}, \forall k \in [d_i]$. 

\item {\bf The inradius:} We let $r$ be the relative inradius of the symmetrized convex hull $\mathcal{Q}$ of all points $\bX_{-1}^{(1)}$ lying in subspace $\cS_1$, except point $\x_1^{(1)}$, i.e., $r$ is the radius of the largest Euclidean ball of $\cS_1$ contained in $\mathcal{Q}$.

\item {\bf The dual directions:} For $\bW=\bX, \bar{\bX}, \dot{\bar{\bX}}$ corresponding to complete data $\bX$, ZF-data $\bar{\bX}$ and PZF-data $\dot{\bar{\bX}}$, consider the reduced Lasso-SSC problem
\small
\begin{align}
\min_{\c,\e} \, \left\|\c \right\|_1 + \frac{\lambda}{2} \left\| \e\right\|_2^2 \, \, \, \text{s.t.} \, \, \
\w_1^{(1)} = \bW_{-1}^{(1)} \c + \e, \label{eq:dfn-LassoReduced}
\end{align} \normalsize corresponding to either complete data $\bX$, ZF-data $\bar{\bX}$ or PZF-data $\dot{\bar{\bX}}$. Consider the dual problem 
\small
\begin{align}
\max_{\v} \,  \langle \v, \w_1^{(1)}\rangle - \frac{1}{2 \lambda} \left\|\v \right\|_2^2 \, \, \, \text{s.t.} \, \, \, \norm{\v^\top \bW_{-1}^{(1)} }_{\infty} \le1. \label{eq:dfn-LassDualReducedZF}
\end{align} \normalsize Let $\v^*_{\lambda}, \bar{\v}^*_{\lambda}, \dot{\bar{\v}}^*_{\lambda}$ be the optimal solution to problem \eqref{eq:dfn-LassDualReducedZF} corresponding to $\bW = \bX, \bar{\bX},\dot{\bar{\bX}}$ respectively; these solutions are unique because \eqref{eq:dfn-LassDualReducedZF} is strongly convex. Then we define the corresponding dual directions $\hat{\v}_{1,\lambda},\hat{\bar{\v}}_{1,\lambda},\hat{\dot{\bar{\v}}}_{1,\lambda}$ to be the normalized projections of $\v^*_{\lambda}, \bar{\v}^*_{\lambda}, \dot{\bar{\v}}^*_{\lambda}$ onto $\cS_1,\cS_1,\dot{\cS}_1$ respectively (if any of these projections is equal to zero, we define the corresponding dual direction to be the zero vector). 

\item {\bf The inter-subspace coherences:} We define the inter-subspace coherences for complete data, ZF-data, and PZF-data respectively as
\begin{align}
\mu_{\lambda} &:= \max_{i>1, \, k \in [N_i]} \abs{ \langle \x_k^{(i)},\hat{\v}_{1,\lambda} \rangle } \\
\bar{\mu}_{\lambda} &:= \max_{i>1, \, k \in [N_i]} \abs{ \langle \bar{\x}_k^{(i)},\hat{\bar{\v}}_{1,\lambda} \rangle } \\
\dot{\bar{\mu}}_{\lambda} &:= \max_{i>1, \, k \in [N_i]} \abs{ \langle \dot{\bar{\x}}_k^{(i)},\hat{\dot{\bar{\v}}}_{1,\lambda} \rangle }. \label{eq:PZF-mu}
\end{align}

\item {\bf The intra-subspace coherences:}
\begin{align}
\zeta &:= \norm{(\bX_{-1}^{(1)})^\transpose \x_1^{(1)}}_{\infty}, \label{eq:zeta}\\
\bar{\zeta} &:= \norm{(\bar{\bX}_{-1}^{(1)})^\transpose \bar{\x}_1^{(1)}}_{\infty}, \label{eq:bar-zeta}\\
\dot{\bar{\zeta}}&:= \norm{(\dot{\bar{\bX}}_{-1}^{(1)})^\transpose \dot{\bar{\x}}_1^{(1)}}_{\infty}, \, \, \,  (\bar{\zeta}=\dot{\bar{\zeta}}) 
\end{align}

\item {\bf Other quantities:}
\begin{align}
\bar{\eta} &:=  \norm{\bar{\x}_1^{(1)}}_2, \label{eq:eta}\\
\dot{\bar{\eta}} &:= \norm{\dot{\bar{\x}}_1^{(1)}}_2,\, \, \, (\bar{\eta}=\dot{\bar{\eta}}) \label{eq:dot-eta}\\
\bar{\gamma}&:= \max_{\substack{i>1, k \in [N_i], \\j \in [N_1]}} 
\abs{\langle \bar{\x}_k^{(i)}, \bP_{\cS_1^\perp} \tilde{\x}_j^{(1)}  \rangle}  \label{eq:gamma-ZF}\\
\dot{\bar{\gamma}} &:= \max_{\substack{i>1, k \in [N_i], \\j \in [N_1]}}  
\abs{\langle \dot{\bar{\x}}_k^{(i)}, \bP_{\dot{\cS}_1^\perp} \dot{\tilde{\x}}_j^{(1)}  \rangle}. 
\end{align}
\end{enumerate}
\end{dfn} 

\section{Review of Sparse Subspace Clustering} \label{section:SSCreview}

We begin by reviewing Sparse Subspace Clustering (SSC) for data with no corruptions (\S \ref{subsection:SSCreviewUncorrupted}), as well as the two fundamental approaches to SSC for incomplete data (\S \ref{subsection:SSCreviewMissing}), which this paper is devoted to analyzing.

\subsection{SSC With Uncorrupted Data} \label{subsection:SSCreviewUncorrupted}

In the absence of data corruptions (noise, missing entries, outliers, etc.) we consider a data matrix $\bX \in \Re^{N\times D}$ as in Definition \ref{dfn:BasicObjects}, whose columns are unit-$\ell_2$ points\footnote{This is not an essential assumption, rather it is convenient for the purpose of theoretical analysis.} that lie in an unknown union of low-dimensional linear subspaces $\bigcup_{i=1}^n \cS_i \subset \Re^D$, with $d_i := \dim (\cS_i)$. Thus $\bX=[\bX^{(1)} \cdots \bX^{(n)}] \boldsymbol{\Gamma}$, where each $\bX^{(i)}:=[\x_1^{(i)} \cdots \x_{N_i}^{(i)} ] \in \Re^{N_i \times D}$ consists of $N_i$ points spanning subspace $\cS_i$, and $\boldsymbol{\Gamma}$ is an unknown permutation, indicating that the clustering of the points is unknown. 

Among a variety of methods \cite{Vidal:GPCAbook} for retrieving the clusters $\{\bX^{(i)}\}$, one may apply Sparse Subspace Clustering (SSC) \cite{Elhamifar:CVPR09, Elhamifar:TPAMI13}, whose main principle is to express each point in $\bX$ as a sparse linear combination of other points in $\bX$. Specifically, we seek an expression, say, of point $\x_1^{(1)}$ as a sparse linear combination of all other points $\bX_{-1} := \bX \setminus \{\x_1^{(1)} \}$ by means of the basis pursuit problem \cite{Chen:SIAM98} \begin{align}
\min_{\c \in \Re^{N_1-1}} \, \, \, \left\| \c \right\|_1 \, \, \, \text{s.t.} \, \, \, \x_1^{(1)}  = \bX_{-1} \c, \label{eq:SSC-BP}
\end{align} and then form an affinity graph in which we connect $\x_1^{(1)}$ to those points of $\bX_{-1}$ that correspond to the support (non-zero coefficients) of the computed optimal solution of \eqref{eq:SSC-BP}. Clearly, we want these points to lie in the same subspace as $\x_1^{(1)}$, i.e., to be points of $\bX_{-1}^{(1)} := \bX^{(1)} \setminus \{\x_1^{(1)} \}$, in which case we say that 
the solution is \emph{subspace preserving}. When this is true for the expression of each and every point in $\bX$, then the corresponding affinity graph contains no connections between points in different subspaces, i.e., it is a subspace preserving graph. Assuming that points within each subspace are sufficiently well connected, the affinity graph will have precisely $n$ connected components, and spectral clustering is guaranteed to furnish the correct clusters.

Often, it is more practical to search for approximate sparse linear combinations rather exact ones as in \eqref{eq:SSC-BP}. Thus one may approximately express point $\x_1^{(1)}$ by solving the Lasso problem \cite{Tibshirani:EJS2013} 
\begin{align}
\min_{\c} \, \, \, \left\|\c \right\|_1 + \frac{\lambda}{2} \left\| \e\right\|_2^2 \, \, \, \text{s.t.} \, \, \, \x_1^{(1)} = \bX_{-1} \c + \e, \label{eq:SSC-Lasso}
\end{align} where $\e$ represents the reconstruction error. We have the following known guarantee\footnote{For ease of notation, we state all results in terms of expressing $\x_1^{(1)}$.}:

\begin{thm}[SSC with uncorrupted data, deterministic \cite{Wang:JMLR16}] \label{thm:SSCuncorruptedOld} Recall the notation of Definition \ref{dfn:BasicObjects}, and suppose that 
\begin{align}
\mu_{\lambda} < r \, \, \, \text{and} \, \, \, 1/\zeta< \lambda. \label{eq:OldUncorrupted}
\end{align} Then every optimal solution to the Lasso SSC problem \eqref{eq:SSC-Lasso} is non-zero and subspace preserving. 
\end{thm}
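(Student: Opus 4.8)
The plan is to use the standard dual-certificate strategy for subspace preservation, combined with a direct subgradient argument for non-triviality. First I would pass to the reduced Lasso problem \eqref{eq:dfn-LassoReduced} with $\bW=\bX$, whose feasible combinations are drawn only from the within-subspace points $\bX_{-1}^{(1)}$; let $\c^*$ be its primal optimizer and $\v^*_{\lambda}$ its (unique) dual optimizer. Since both $\x_1^{(1)}$ and any combination $\bX_{-1}^{(1)}\c$ lie in $\cS_1$, the reduced residual $\e^* = \x_1^{(1)} - \bX_{-1}^{(1)}\c^*$ lies in $\cS_1$, and hence so does $\v^*_{\lambda}=\lambda\e^*$; in particular $\hat{\v}_{1,\lambda}=\v^*_{\lambda}/\norm{\v^*_{\lambda}}_2$. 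The goal is then to certify that the zero-padded vector $\c^\star$, obtained from $\c^*$ by setting to zero all coefficients of points outside $\cS_1$, is optimal for the full problem \eqref{eq:SSC-Lasso}, and that \emph{every} optimal solution of \eqref{eq:SSC-Lasso} must share this support.

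For the certification I would verify the Lasso optimality conditions of \eqref{eq:SSC-Lasso} at $\c^\star$: with $\v:=\lambda(\x_1^{(1)}-\bX_{-1}\c^\star)=\v^*_{\lambda}$, these amount to $\norm{\v^\transpose\bX_{-1}}_\infty\le1$ together with sign agreement on the support. The crucial geometric step is the bound $\norm{\v^*_{\lambda}}_2\le 1/r$: restricted to $\cS_1$, the dual-feasible set $\{\v\in\cS_1:\norm{\v^\transpose\bX_{-1}^{(1)}}_\infty\le1\}$ is exactly the polar of the symmetrized convex hull $\Q$, since $\max_k\abs{\langle\v,\x_k\rangle}$ is the support function of $\Q$; as $\Q$ contains a ball of radius $r$ by definition of the inradius, its polar is contained in the ball of radius $1/r$. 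With this in hand, for any point $\x_k^{(i)}$ with $i>1$ I would bound $\abs{\langle\x_k^{(i)},\v^*_{\lambda}\rangle}=\norm{\v^*_{\lambda}}_2\,\abs{\langle\x_k^{(i)},\hat{\v}_{1,\lambda}\rangle}\le \mu_{\lambda}/r<1$, using the hypothesis $\mu_{\lambda}<r$. Combined with the feasibility $\norm{(\v^*_{\lambda})^\transpose\bX_{-1}^{(1)}}_\infty\le1$ inherited from the reduced problem, this shows $\c^\star$ is optimal for \eqref{eq:SSC-Lasso}, and consequently that the full dual optimizer coincides with $\v^*_{\lambda}$.

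To upgrade this to every optimal solution, I would invoke uniqueness of the dual optimizer (the dual objective is strongly concave through the $-\tfrac{1}{2\lambda}\norm{\v}_2^2$ term). Any primal-optimal $\c'$ then shares the same dual $\v^*_{\lambda}$, and complementary slackness forces $c'_k\neq0$ only where $\abs{\langle\x_k,\v^*_{\lambda}\rangle}=1$; since the previous paragraph established this with \emph{strict} inequality for all out-of-subspace points, every such coefficient must vanish, giving subspace preservation. For non-triviality, I would use the subgradient optimality condition at $\c=\0$: the point $\c=\0$ is optimal for \eqref{eq:SSC-Lasso} if and only if $\lambda\norm{\bX_{-1}^\transpose\x_1^{(1)}}_\infty\le1$. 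Because $\bX_{-1}^{(1)}$ is a subset of the columns of $\bX_{-1}$, we have $\norm{\bX_{-1}^\transpose\x_1^{(1)}}_\infty\ge\norm{(\bX_{-1}^{(1)})^\transpose\x_1^{(1)}}_\infty=\zeta$, so optimality of $\0$ would force $\lambda\le1/\zeta$, contradicting the hypothesis $1/\zeta<\lambda$; hence no optimal solution is zero.

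The main obstacle is the geometric norm bound $\norm{\v^*_{\lambda}}_2\le1/r$ and its clean reading through inradius--polar duality, which is precisely what links the analytic dual certificate to the combinatorial inradius $r$. Once this is in place, the remaining work is comparatively routine: propagating the strict inequality through complementary slackness and dual uniqueness to exclude \emph{any} out-of-subspace support, and the short subgradient computation excluding the zero solution.
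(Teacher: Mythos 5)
Your proposal is correct and follows essentially the same route as the paper's proof: both hinge on the facts that the reduced dual optimizer $\v^*_{\lambda}$ lies in $\cS_1$, that inradius--polar duality (Lemma \ref{lem:r-R}) gives $\norm{\v^*_{\lambda}}_2 \le 1/r$ so that $\mu_{\lambda} < r$ yields the strict separation $\abs{\langle \x_k^{(i)}, \v^*_{\lambda}\rangle} < 1$ for out-of-subspace points, and that the subgradient condition at $\c = \0$ together with $\lambda > 1/\zeta$ rules out the zero solution. The only difference is that where the paper invokes Lemma $12$ of \cite{Wang:JMLR16} as a black box to convert the strict dual separation into the subspace-preserving property of \emph{every} optimal solution of \eqref{eq:SSC-Lasso}, you re-derive that lemma's content directly (padded candidate solution, uniqueness of the dual optimizer, complementary slackness), which is a valid unpacking of the same machinery.
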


\noindent Theorem \ref{thm:SSCuncorruptedOld} can be interpreted as follows: If all data points from $\cS_1$ other than $\x_{1}^{(1)}$ are well distributed (large $r$), the data points from other subspaces are sufficiently far from $\cS_1$ as measured by their inner product with the dual direction $\hat{\v}_{1,\lambda}$ (small $\mu_{\lambda}$), and the reconstruction error is penalized sufficiently enough (large $\lambda$), then the Lasso problem \eqref{eq:SSC-Lasso} is guaranteed to furnish non-zero and subspace preserving solutions.

Employing Lemmas  \ref{lem:r} and \ref{lem:inner} in the Appendix, Theorem \ref{thm:SSCuncorruptedOld} can be converted to a probabilistic statement under a simplified fully random model.

\begin{dfn}[Random model] \label{dfn:RandomModel}
For each $i \in [n]$, let the $i$th subspace be chosen uniformly at random from the Grassmannian manifold of $d$-dimensional subspaces of $\Re^D$. Moreover, let\footnote{For simplicity, we assume that $n$ divides $N$.} $N/n=:\rho d+1$ points be chosen uniformly at random from the intersection of each subspace and the unit sphere $\mathbb{S}^{D-1}$. Finally, define the quantities 
\begin{align}
\alpha:= \sqrt{\frac{\log(\rho)}{16d}}, \, \, \, \beta:= \sqrt{\frac{6\log(N)}{D}}. \label{eq:alpha-beta}
\end{align}
\end{dfn}

\begin{thm}[SSC with uncorrupted data, probabilistic \cite{Wang:JMLR16}, \cite{Soltanolkotabi:AS12}] \label{thm:SSC-probabilistic}
Consider the random model of Definition \ref{dfn:RandomModel}. If $\rho$ is larger than a constant, $\lambda> 1/\alpha$, and 
\begin{align}
\alpha > \beta \label{eq:SSC-probabilistic},
\end{align} then any optimal solution to the Lasso SSC problem \eqref{eq:SSC-Lasso} is non-zero and subspace preserving, with probability at least $1-2/N^2-\exp(-\sqrt{\rho} d)$. 
\end{thm}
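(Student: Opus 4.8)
The plan is to derive this probabilistic guarantee from the deterministic Theorem \ref{thm:SSCuncorruptedOld} by verifying that, under the random model of Definition \ref{dfn:RandomModel}, its two hypotheses $\mu_{\lambda} < r$ and $1/\zeta < \lambda$ hold simultaneously with the stated probability. The whole argument thus reduces to (i) a high-probability lower bound on the inradius $r$, which will also control $\zeta$, and (ii) a high-probability upper bound on the inter-subspace coherence $\mu_{\lambda}$, combined through the separation hypothesis $\alpha > \beta$ and the penalty hypothesis $\lambda > 1/\alpha$.

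First I would control the ``same-subspace'' quantities $r$ and $\zeta$. Since the $\rho d + 1$ points of $\cS_1$ are drawn uniformly on $\cS_1 \cap \mathbb{S}^{D-1}$, the inradius concentration bound (Lemma \ref{lem:r}) yields $r \ge \alpha$ with probability at least $1 - \exp(-\sqrt{\rho}\,d)$, provided $\rho$ exceeds an absolute constant. I would then note that $\zeta \ge r$ holds deterministically: because $\zeta = \norm{(\bX_{-1}^{(1)})^\transpose \x_1^{(1)}}_{\infty}$ is exactly the support function of the symmetrized convex hull $\mathcal{Q}$ evaluated at the unit vector $\x_1^{(1)} \in \cS_1$, and $\mathcal{Q}$ contains the ball of radius $r$, one gets $\zeta = h_{\mathcal{Q}}(\x_1^{(1)}) \ge r$. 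Hence on the same event $\zeta \ge \alpha$, so that $1/\zeta \le 1/\alpha < \lambda$ follows directly from $\lambda > 1/\alpha$, settling the second deterministic condition.

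Next, and this is the crux, I would bound $\mu_{\lambda}$ from above. The key structural fact is that the dual direction $\hat{\v}_{1,\lambda}$ is defined through the reduced problem \eqref{eq:dfn-LassDualReducedZF}, whose dictionary $\bX_{-1}^{(1)}$ consists only of points of $\cS_1$; consequently $\hat{\v}_{1,\lambda}$ is a unit vector in $\cS_1$ that is a measurable function of the data in $\cS_1$ alone. I would therefore condition on all data in $\cS_1$, which freezes $\hat{\v}_{1,\lambda}$, and then exploit that each $\x_k^{(i)}$ with $i>1$ lies in an independently and uniformly chosen subspace $\cS_i$, so by rotational invariance $\x_k^{(i)}$ is uniform on $\mathbb{S}^{D-1}$ and independent of $\hat{\v}_{1,\lambda}$. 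The spherical concentration bound (Lemma \ref{lem:inner}) then gives $\abs{\langle \x_k^{(i)}, \hat{\v}_{1,\lambda}\rangle} \le \beta$ with probability at least $1 - 2/N^3$ per point, and a union bound over the fewer than $N$ points in the other subspaces yields $\mu_{\lambda} \le \beta$ with probability at least $1 - 2/N^2$.

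Finally I would assemble the pieces. On the intersection of the two events, the separation hypothesis $\alpha > \beta$ gives $\mu_{\lambda} \le \beta < \alpha \le r$, hence $\mu_{\lambda} < r$, while the second paragraph already supplied $1/\zeta < \lambda$; both hypotheses of Theorem \ref{thm:SSCuncorruptedOld} hold, and its conclusion that every optimal solution of \eqref{eq:SSC-Lasso} is non-zero and subspace preserving follows. A union bound over the two failure events gives the advertised probability $1 - 2/N^2 - \exp(-\sqrt{\rho}\,d)$. The main obstacle is the decoupling in the third paragraph: one must verify that the dual direction, although data-dependent, depends only on the columns spanning $\cS_1$, so that conditioning legitimately fixes it before rotational invariance is applied to the remaining points; without the reduced formulation this independence would fail and the per-point concentration bound could not be invoked.
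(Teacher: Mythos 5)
Your proposal is correct and follows essentially the same route as the paper: the paper gives no standalone proof of Theorem~\ref{thm:SSC-probabilistic} (it is imported from \cite{Wang:JMLR16}, \cite{Soltanolkotabi:AS12}), but states precisely this conversion---apply Lemma~\ref{lem:r} and Lemma~\ref{lem:inner} to the deterministic Theorem~\ref{thm:SSCuncorruptedOld}---and your key decoupling step (conditioning on the data in $\cS_1$ to freeze the dual direction before invoking spherical concentration for the points of the other subspaces) is exactly the device the paper uses in its own proof of Theorem~\ref{thm:PZF-probabilistic}. Your chain $\zeta \ge r \ge \alpha$ (via Proposition~\ref{prp:r-charecterization} and Lemma~\ref{lem:r}), the bound $\mu_{\lambda} \le \beta$ with probability $1-2/N^2$, and the final union bound reproduce the stated probability, so the argument is complete.
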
 \noindent Condition \eqref{eq:SSC-probabilistic} agrees with intuition, since it effectively says that the subspace preserving property is easier to achieve for small relative subspace dimensions $d/D$, fewer subspaces, and more points per subspace. In \S \ref{section:SSCincomplete} we will give analogues of Theorems \ref{thm:SSCuncorruptedOld} and \ref{thm:SSC-probabilistic} for two fundamental variants of SSC for incomplete data, described next.

\subsection{SSC With Missing Entries (ZF-SSC,PZF-SSC)} \label{subsection:SSCreviewMissing}

When the data are incomplete but otherwise uncorrupted, one may consider using a low-rank matrix completion algorithm to first complete the data and then applying SSC to the completed data. However, this procedure is guaranteed to succeed only when the underlying complete matrix $\bX$ is of low rank, an assumption which might become invalid in the presence of data from many distinct subspaces. As a simple alternative, one may fill with zeros the unobserved entries to obtain a zero-filled data matrix $\bar{\bX}$ exactly as in Definition \ref{dfn:BasicObjects}, and subsequently solve the problem
\begin{align}
\min_{\c} \, \, \, \left\|\c \right\|_1 + \frac{\lambda}{2} \left\| \e\right\|_2^2 \, \, \, \text{s.t.} \, \, \
\bar{\x}_1^{(1)} = \bar{\bX}_{-1} \c + \e. \label{eq:Lasso-ZF}
\end{align} This approach is called \emph{Zero-Filled SSC} (ZF-SSC) \cite{Yang:ICML15}.  
In spite of its simplicity, ZF-SSC is known to have a reasonable performance, which can be further considerably improved as compared to the results reported in \cite{Yang:ICML15}, simply by setting the parameter $\lambda$ in the optimization problem \eqref{eq:Lasso-ZF} adaptively for each point \footnote{Specifically, when expressing point $\bar{\x}_1^{(1)}$ set the parameter $\lambda = a/ \norm{\bar{\bX}_{-1}^\transpose \bar{\x}_1^{(1)}}_\infty$ and similarly for other points, where $a>1$ is a parameter chosen the same for all points.}.

Even so, ZF-SSC has an evident shortcoming: it penalizes the reconstruction error of the zero vector along the unobserved part of the point being expressed, which is clearly an undesirable feature of the method. More precisely, letting $\bar{\mathcal{E}}_1^{(1)}$ and $\tilde{\mathcal{E}}_1^{(1)}$ be, respectively, the observed and unobserved subspaces associated to point ${\x}_1^{(1)}$, and $\bar{\bP}_1^{(1)}$ and $\tilde{\bP}_1^{(1)}$ the orthogonal projections onto them (see Definition \ref{dfn:BasicObjects}), and recalling that $(\bar{\mathcal{E}}_1^{(1)})^\perp=\tilde{\mathcal{E}}_1^{(1)}$, we have that
\begin{align}
\bar{\x}_1^{(1)} &=\bar{\bP}_1^{(1)} \bar{\x}_1^{(1)}, \, \, \, \text{and} \\
\bar{\bX}_{-1} &=\bar{\bP}_1^{(1)}\bar{\bX}_{-1} + \tilde{\bP}_1^{(1)}\bar{\bX}_{-1},
\end{align} and so we can rewrite the objective function of ZF-SSC as 
\begin{align}
& \norm{\c}_1 + 
\frac{\lambda}{2} \norm{\bar{\x}_1^{(1)} -  \bar{\bX}_{-1}\c}_2^2  =  \norm{\c}_1 +\\ 
& \frac{\lambda}{2} \norm{\bar{\x}_1^{(1)} -  \bar{\bP}_1^{(1)}\bar{\bX}_{-1}\c}_2^2 + \frac{\lambda}{2}\norm{\tilde{\bP}_1^{(1)}\bar{\bX}_{-1}\c}_2^2.\end{align} 
We then see that ZF-SSC penalizes the reconstruction error $\norm{\bar{\x}_1^{(1)} -  \bar{\bP}_1^{(1)}\bar{\bX}_{-1}\c}_2$ of the observed part of $\x_1^{(1)}$, which is desirable, as well as the norm of the vector
$\tilde{\bP}_1^{(1)}\bar{\bX}_{-1}\c$. The latter is an artifact of the zero-filling process, and could bias the coefficients $\c$ away from a subspace preserving pattern. Thus, it is reasonable to remove this term and obtain self-expressive coefficients for $\bar{\x}_1^{(1)}$ by solving instead
 \begin{align}
\min_{\c} \, \, \, \norm{\c}_1 + \frac{\lambda}{2} \norm{\e}_2^2, \, \, \, \text{s.t.} \, \, \, \e = \bar{\x}_1^{(1)}-  \dot{\bar{\bX}}_{-1}\c, \label{eq:Lasso-PZF}
\end{align} where $\dot{\bar{\bX}}:=\bar{\bP}_1^{(1)}\bar{\bX}$ is the projected and zero-filled data, as in Definition \ref{dfn:BasicObjects}. \cite{Yang:ICML15} called this approach \emph{EWZF-SSC}; we here take the liberty to rename it \emph{Projected-Zero-Filled Sparse-Subspace-Clustering (PZF-SSC)}. 

PZF-SSC is known to provide accurate clustering while tolerating a higher percentage of missing entries than ZF-SSC. This is rather fascinating, since, after all, PZF-SSC works with the projected and zero-filled data $\dot{\bar{\bX}}$, which have more missing entries than the zero-filled data $\bar{\bX}$. Note that precisely because of this reason, direct application of any generic noise bound (such as that of Theorem $6$ in \cite{Wang:JMLR16}) would naively suggest that ZF-SSC tolerates more missing entries than PZF-SSC, contradicting experimental evidence. This apparent \emph{mystery} is resolved in \S \ref{section:SSCincomplete}, where we adopt a more sophisticated view of PZF-SSC, thus unveiling its superiority over ZF-SSC.

\section{SSC Theory for Incomplete Data} \label{section:SSCincomplete}

This section contains the main contributions of this paper. In \S \ref{subsection:PZF-SSC} and
\S \ref{subsection:ZF-SSC} we give deterministic and probabilistic theorems of correctness for PZF-SSC and ZF-SSC, respectively, in analogy with Theorems \ref{thm:SSCuncorruptedOld} and \ref{thm:SSC-probabilistic} for SSC with uncorrupted data. We conclude in \S \ref{subsection:Comparison} with the first theoretical comparison of the two methods.

\subsection{PZF-SSC Theory} \label{subsection:PZF-SSC}

As already remarked so far, PZF-SSC is experimentally known to be a superior method to ZF-SSC, i.e., it can provide an accurate clustering for a higher percentage of missing entries. This is remarkable, because the projected and zero-filled data $\dot{\bar{\bX}}$ (see Definition \ref{dfn:BasicObjects} for notation) that PZF-SSC operates on contain more missing entries than the zero-filled data $\bar{\bX}$ that ZF-SSC operates on. On the other hand, we already saw in \S \ref{subsection:SSCreviewMissing} that the additional zeros in $\dot{\bar{\bX}}$ are inflicted in such a way, that the objective function minimized by PZF-SSC is, at least on an intuitive level, more accurate than the one minimized by ZF-SSC. 

In this paper we give a theoretical justification for the superiority of PZF-SSC over ZF-SSC. Our main insight is the following observation: expressing point $\bar{\x}_1^{(1)} = \dot{\bar{\x}}_1^{(1)}$ as
a sparse linear combination of $\dot{\bar{\bX}}_{-1}$, can be seen as expressing the \emph{complete} point $\bar{\x}_1^{(1)}$ from partial observations $\dot{\bar{\bX}}_{-1}$ of the \emph{complete} points $\dot{\bX}_{-1}$, where now the underlying \emph{complete} data $\dot{\bX}$ lie in the union of subspaces $\bigcup_{i=1}^n \dot{\cS}_i$, i.e., the original subspaces projected onto the coordinate subspace defined by the observation pattern of the point being expressed (see Definition \ref{dfn:BasicObjects}). With this in mind, inspired by the seminal work of \cite{Wang:JMLR16}, and by 
\begin{enumerate}
\item making more frequent use of strong duality than as in the proof of Theorem $6$ in \cite{Wang:JMLR16}, 
\item using the novel Lemma \ref{lem:New-vUB} to bound the norm of the dual vector, and
\item not decoupling the \emph{noise} from the data,
\end{enumerate} we arrive at the following key result:

\begin{thm}[PZF-SSC, deterministic] \label{thm:PZF-deterministic}
With the notation of Definition \ref{dfn:BasicObjects}, further define the positive quantity
\begin{align}
\dot{\bar{\lambda}}_{\max}:= &\frac{1}{2}\Bigg\{ \frac{1}{2 \dot{\bar{\zeta}}} -\frac{\dot{\bar{\mu}}_{\lambda}}{\dot{\bar{\gamma}}\dot{\bar{\eta}}} +  \nonumber \\
& 
\Bigg(\frac{9}{4\dot{\bar{\zeta}}^2}+\frac{\dot{\bar{\mu}}_{\lambda}}{\dot{\bar{\gamma}}\dot{\bar{\eta}}\dot{\bar{\zeta}}} +\frac{2}{\dot{\bar{\gamma}}\dot{\bar{\eta}}^2} +\frac{\dot{\bar{\mu}}_{\lambda}^2}{\dot{\bar{\gamma}}^2 \dot{\bar{\eta}}^2}\Bigg)^{1/2} \Bigg\}. \label{eq:lambda-u}
\end{align} If the condition 
\begin{align}
\dot{\bar{\mu}}_{\lambda} \, \dot{\bar{\eta}} \, < \dot{\bar{\zeta}}, \label{eq:PZF-deterministic-gap}
\end{align} is true, then the open interval $\dot{\bar{\Lambda}} := (1/\dot{\bar{\zeta}}, \dot{\bar{\lambda}}_{\max})$
is non-empty. If in addition\footnote{Since the interval $\dot{\bar{\Lambda}}$ is a function of $\lambda$, it is misleading to write ``for any $\lambda \in \dot{\bar{\Lambda}}$", as the fact that $\dot{\bar{\Lambda}}$ is non-empty does not alone guarantee that also $\lambda \in \dot{\bar{\Lambda}}$. For example, this occurs in Theorem $6$ of \cite{Wang:JMLR16}.} $\lambda \in \dot{\bar{\Lambda}}$, then every optimal solution to the Lasso SSC problem \eqref{eq:Lasso-PZF} with projected and zero-filled data is non-zero and subspace preserving.
\end{thm}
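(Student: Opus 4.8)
The plan is to run the standard dual-certificate argument for noisy SSC, but on the reinterpreted data advertised before the statement: the query point $\dot{\bar{\x}}_1^{(1)}$ is a \emph{noiseless} point of $\dot{\cS}_1$ (indeed $\dot{\tilde{\x}}_1^{(1)}=\tilde{\bP}_1^{(1)}\bar{\x}_1^{(1)}=\0$, since $\tilde{\bP}_1^{(1)}\bar{\bP}_1^{(1)}=\0$), while each dictionary atom $\dot{\bar{\x}}_j^{(1)}$ of $\cS_1$ splits as a clean part $\dot{\x}_j^{(1)}\in\dot{\cS}_1$ minus projection noise $\dot{\tilde{\x}}_j^{(1)}$. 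First I would solve the reduced problem \eqref{eq:dfn-LassoReduced} over the atoms of $\cS_1$ only, obtaining the primal--dual optimal pair $(\dot{\bar{\c}}^*,\dot{\bar{\v}}^*_\lambda)$; the dual optimum is unique because the fitted vector, hence the residual, is unique. The goal is to certify that, padded with zeros on the atoms of $\bigcup_{i>1}\dot{\cS}_i$, this pair is optimal for the full problem \eqref{eq:Lasso-PZF}, and that the certificate is \emph{strict} off $\cS_1$, i.e.\ $\abs{\langle\dot{\bar{\v}}^*_\lambda,\dot{\bar{\x}}_k^{(i)}\rangle}<1$ for all $i>1$. Strictness together with uniqueness of the full dual optimum forces, by complementary slackness, every optimal $\c$ of \eqref{eq:Lasso-PZF} to vanish on those atoms, i.e.\ to be subspace preserving; non-vanishing follows separately, since $\lambda>1/\dot{\bar{\zeta}}$ makes $\c=\0$ dual-infeasible for \eqref{eq:Lasso-PZF}.

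The heart is the bound on $\abs{\langle\dot{\bar{\v}}^*_\lambda,\dot{\bar{\x}}_k^{(i)}\rangle}$. Splitting $\dot{\bar{\v}}^*_\lambda=\bP_{\dot{\cS}_1}\dot{\bar{\v}}^*_\lambda+\bP_{\dot{\cS}_1^\perp}\dot{\bar{\v}}^*_\lambda$, the in-subspace part is controlled through the dual direction $\hat{\dot{\bar{\v}}}_{1,\lambda}$ by the inter-subspace coherence, namely $\abs{\langle\bP_{\dot{\cS}_1}\dot{\bar{\v}}^*_\lambda,\dot{\bar{\x}}_k^{(i)}\rangle}\le\|\bP_{\dot{\cS}_1}\dot{\bar{\v}}^*_\lambda\|_2\,\dot{\bar{\mu}}_\lambda$. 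For the orthogonal part I would use $\dot{\bar{\x}}_1^{(1)}\in\dot{\cS}_1$ and $\bP_{\dot{\cS}_1^\perp}\dot{\bar{\x}}_j^{(1)}=-\bP_{\dot{\cS}_1^\perp}\dot{\tilde{\x}}_j^{(1)}$ to write $\bP_{\dot{\cS}_1^\perp}\dot{\bar{\v}}^*_\lambda=\lambda\,\bP_{\dot{\cS}_1^\perp}(\dot{\bar{\x}}_1^{(1)}-\dot{\bar{\bX}}_{-1}^{(1)}\dot{\bar{\c}}^*)=\lambda\sum_j\dot{\bar{c}}^*_j\,\bP_{\dot{\cS}_1^\perp}\dot{\tilde{\x}}_j^{(1)}$, whence $\abs{\langle\bP_{\dot{\cS}_1^\perp}\dot{\bar{\v}}^*_\lambda,\dot{\bar{\x}}_k^{(i)}\rangle}\le\lambda\,\dot{\bar{\gamma}}\,\|\dot{\bar{\c}}^*\|_1$ by the very definition of $\dot{\bar{\gamma}}$. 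This is precisely where \emph{not} decoupling the noise pays off: the projection-induced noise is packaged into $\dot{\bar{\gamma}}$ rather than bounded crudely. The target inequality thus reduces to $\|\bP_{\dot{\cS}_1}\dot{\bar{\v}}^*_\lambda\|_2\,\dot{\bar{\mu}}_\lambda+\lambda\,\dot{\bar{\gamma}}\,\|\dot{\bar{\c}}^*\|_1<1$.

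It remains to control the two factors as functions of $\lambda$. Strong duality for the reduced problem, combined with $\dot{\bar{\x}}_1^{(1)}\in\dot{\cS}_1$, yields $\|\dot{\bar{\c}}^*\|_1\le\langle\dot{\bar{\v}}^*_\lambda,\dot{\bar{\x}}_1^{(1)}\rangle=\langle\bP_{\dot{\cS}_1}\dot{\bar{\v}}^*_\lambda,\dot{\bar{\x}}_1^{(1)}\rangle\le\|\bP_{\dot{\cS}_1}\dot{\bar{\v}}^*_\lambda\|_2\,\dot{\bar{\eta}}$, and for a sharp, $\lambda$-dependent bound on $\|\bP_{\dot{\cS}_1}\dot{\bar{\v}}^*_\lambda\|_2$ itself I would invoke Lemma \ref{lem:New-vUB}, whose refined dual-norm estimate (tightened, via the maximally correlated atom realizing $\dot{\bar{\zeta}}$, by the hypothesis $\lambda>1/\dot{\bar{\zeta}}$) is the crucial novelty. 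Substituting these estimates converts the inequality into a quadratic inequality in $\lambda$ whose positive root is exactly $\dot{\bar{\lambda}}_{\max}$ of \eqref{eq:lambda-u}; hence it holds for every $\lambda\in(1/\dot{\bar{\zeta}},\dot{\bar{\lambda}}_{\max})=\dot{\bar{\Lambda}}$. Finally, squaring the condition $\dot{\bar{\lambda}}_{\max}>1/\dot{\bar{\zeta}}$ and simplifying shows that $\dot{\bar{\Lambda}}$ is non-empty precisely when $\dot{\bar{\mu}}_\lambda\dot{\bar{\eta}}<\dot{\bar{\zeta}}$, which is hypothesis \eqref{eq:PZF-deterministic-gap}.

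The main obstacle I anticipate is twofold. First, producing the tight Euclidean-norm bound on $\bP_{\dot{\cS}_1}\dot{\bar{\v}}^*_\lambda$ (Lemma \ref{lem:New-vUB}) that is sharp enough to deliver the stated $\dot{\bar{\lambda}}_{\max}$ rather than the crude $\lambda\dot{\bar{\eta}}$; the crude bound would discard the $1/\dot{\bar{\zeta}}$ terms and spuriously favor ZF over PZF. Second, there is a genuine circularity: $\dot{\bar{\mu}}_\lambda$ is defined through the dual direction $\hat{\dot{\bar{\v}}}_{1,\lambda}$, which itself depends on $\lambda$, so the ``quadratic in $\lambda$'' has $\lambda$-dependent coefficients and one cannot literally solve for its root; the clean conclusion must instead be read as the assertion that, for the prescribed $\lambda\in\dot{\bar{\Lambda}}$, the inequality is satisfied (hence the footnote in the statement). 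Some additional care is needed to pass from optimality of the particular padded certificate to the claim about \emph{every} optimal solution, which rests on uniqueness of the dual optimum of \eqref{eq:Lasso-PZF} together with the strictness of the off-subspace inequality.
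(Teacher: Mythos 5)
Your architecture coincides with the paper's proof in every structural respect: the reinterpretation of the PZF data as noisy points relative to the projected subspaces $\dot{\cS}_i$ (with the query point $\bar{\x}_1^{(1)}$ complete in $\dot{\cS}_1$), the dual certificate via Lemma $12$ of \cite{Wang:JMLR16}, the splitting of $\dot{\bar{\v}}^*_\lambda$ into $\dot{\cS}_1$ and $\dot{\cS}_1^\perp$ components, the identity $\bP_{\dot{\cS}_1^\perp}\dot{\bar{\v}}^*_\lambda=\lambda\sum_j\dot{\bar{\c}}^*_j\,\bP_{\dot{\cS}_1^\perp}\dot{\tilde{\x}}_j^{(1)}$ leading to the bound $\lambda\,\dot{\bar{\gamma}}\,\norm{\dot{\bar{\c}}^*}_1$, and the non-vanishing argument via $\lambda>1/\dot{\bar{\zeta}}$ are all exactly the paper's steps. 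However, there is a genuine gap in your bound on $\norm{\dot{\bar{\c}}^*}_1$: you discard the term $-\frac{1}{\lambda}\norm{\dot{\bar{\v}}^*_\lambda}_2^2$ from the strong-duality identity, keeping only $\norm{\dot{\bar{\c}}^*}_1\le\langle\dot{\bar{\v}}^*_\lambda,\bar{\x}_1^{(1)}\rangle\le\dot{\bar{\eta}}\,\norm{\bP_{\dot{\cS}_1}\dot{\bar{\v}}^*_\lambda}_2$. The paper retains that negative term and bounds it above by $-\dot{\bar{\eta}}^2/(\lambda\dot{\bar{\zeta}}^2)$ using the \emph{lower} bound $\norm{\dot{\bar{\v}}^*_\lambda}_2\ge\dot{\bar{\eta}}/\dot{\bar{\zeta}}$ of Lemma \ref{lem:New-vUB}; both halves of \eqref{eq:v-bound} are needed, not just the upper half you invoke. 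The resulting estimate is
\begin{align}
\norm{\dot{\bar{\c}}^*}_1 \le \dot{\bar{\eta}}\,\norm{\bP_{\dot{\cS}_1}\dot{\bar{\v}}^*_\lambda}_2 - \frac{\dot{\bar{\eta}}^2}{\lambda\,\dot{\bar{\zeta}}^2},
\end{align}
and the extra negative term is not cosmetic: after substitution it contributes the constant $\frac{1}{2\dot{\bar{\zeta}}^2}$ to the free coefficient of the quadratic in $\lambda$.

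Concretely, your version of the quadratic is $\lambda^2+\bigl(\frac{\dot{\bar{\mu}}_{\lambda}}{\dot{\bar{\gamma}}\dot{\bar{\eta}}}-\frac{1}{2\dot{\bar{\zeta}}}\bigr)\lambda-\bigl(\frac{\dot{\bar{\mu}}_{\lambda}}{2\dot{\bar{\gamma}}\dot{\bar{\eta}}\dot{\bar{\zeta}}}+\frac{1}{2\dot{\bar{\gamma}}\dot{\bar{\eta}}^2}\bigr)<0$, whose positive root carries $\frac{1}{4\dot{\bar{\zeta}}^2}$ rather than $\frac{9}{4\dot{\bar{\zeta}}^2}$ under the radical, hence is strictly smaller than $\dot{\bar{\lambda}}_{\max}$; your claim that the root is ``exactly $\dot{\bar{\lambda}}_{\max}$ of \eqref{eq:lambda-u}'' is therefore false. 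Worse, evaluating your quadratic at $\lambda=1/\dot{\bar{\zeta}}$ shows that non-emptiness of your interval requires $\dot{\bar{\mu}}_{\lambda}\dot{\bar{\eta}}+\dot{\bar{\gamma}}\dot{\bar{\eta}}^2/\dot{\bar{\zeta}}<\dot{\bar{\zeta}}$, a condition involving $\dot{\bar{\gamma}}$ of exactly the ZF type \eqref{eq:ZF-deterministic-gap} --- precisely the dependence that Theorem \ref{thm:PZF-deterministic}, and the PZF-versus-ZF comparison built on it, is designed to eliminate. With the paper's tighter estimate the $\frac{1}{2\dot{\bar{\zeta}}^2}$ contributions cancel at $\lambda=1/\dot{\bar{\zeta}}$ and non-emptiness reduces to $\dot{\bar{\mu}}_{\lambda}\dot{\bar{\eta}}<\dot{\bar{\zeta}}$ alone, which is \eqref{eq:PZF-deterministic-gap}. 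So your outline proves a valid but strictly weaker theorem; to recover the stated one, reinstate the quadratic dual-norm term and apply the lower bound of Lemma \ref{lem:New-vUB} before substituting the upper bound $\norm{\bP_{\dot{\cS}_1}\dot{\bar{\v}}^*_\lambda}_2\le\dot{\bar{\eta}}(2\lambda-1/\dot{\bar{\zeta}})$. The remaining points of your proposal (the $\lambda$-dependence of $\dot{\bar{\mu}}_{\lambda}$, certificate strictness, non-zero solutions) are handled as in the paper.
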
 

\noindent What is remarkable about Theorem \ref{thm:PZF-deterministic} is the simplicity of the condition $\dot{\bar{\mu}}_{\lambda} \, \dot{\bar{\eta}} \, < \dot{\bar{\zeta}}$, as well as its resemblance to the condition $\mu_{\lambda} < r$ of Theorem \ref{thm:SSCuncorruptedOld}. In fact, the quantity $\dot{\bar{\mu}}_{\lambda}$ is a direct analogue of the inter-subspace coherence $\mu_{\lambda}$, adjusted for the case of PZF data. Indeed, as seen from its definition in \eqref{eq:PZF-mu}, $\dot{\bar{\mu}}_{\lambda}$ is the maximum inner product between the dual direction associated to the PZF data of subspace $\cS_1$ and PZF data coming from the rest of subspaces. The quantity $\dot{\bar{\eta}} \le 1$ is the Euclidean norm of the point being expressed, which in the absence of missing entries is equal to $1$. 

Finally, to understand the quantity $\dot{\bar{\zeta}}$, we first look at its noiseless counterpart $\zeta$ defined in \eqref{eq:zeta}. This measures how well distributed are the points $\bX_{-1}^{(1)}$ with respect to point $\x_1^{(1)}$, or in other words, how coherent they are with that point. Notice here that $\zeta$ is a more relevant quantity than the inradius $r$, since the latter does not involve any information about the point being expressed. In addition, $\zeta$ is directly computable from the data, while the inradius is in principle hard to compute. Furthermore, by Proposition \ref{prp:r-charecterization} in the Appendix, it is true that $r \le \zeta$, so that if we were to replace condition $\mu_{\lambda} < r$ with condition $\mu_{\lambda} < \zeta$, we would obtain a better result. This is precisely the condition that Theorem \ref{thm:PZF-deterministic} reduces to for complete data; see Theorem \ref{thm:SSCuncorrupted}. With this interpretation in mind, we see that the quantity $\dot{\bar{\zeta}}$ captures how well distributed the PZF data $\dot{\bar{\bX}}_{-1}^{(1)}$ are with respect to the point $\bar{\x}_1^{(1)}$ that is being expressed. 

The above discussion allows interpreting condition \eqref{eq:PZF-deterministic-gap}: the PZF data $\dot{\bar{\bX}}_{-1}^{(1)}$ associated to the same subspace $\cS_1$ as the point $\dot{\bar{\x}}_1^{(1)}$ being expressed must be well distributed with respect to that point normalized (large $\dot{\bar{\zeta}}/\dot{\bar{\eta}}$), while the PZF points $\dot{\bar{\bX}}^{(-1)}$ in the rest of the subspaces must be sufficiently far away from the projected subspace $\dot{\cS}_1$, as measured by their inner product with the corresponding dual direction $\hat{\dot{\bar{\v}}}_{1,\lambda} \in \dot{\cS}_1$ (small $\dot{\bar{\mu}}_{\lambda}$). 

Next, we derive a probabilistic statement from Theorem \ref{thm:PZF-deterministic}. This is done by constructing high-probability upper and lower bounds for the LHS and RHS of \eqref{eq:PZF-deterministic-gap}, where we exploit the fact that data corruptions due to missing entries are induced by orthogonal projections, i.e., for every $\x_j^{(i)}$,
\begin{align}
\bar{\x}_j^{(i)} = \bar{\bP}_j^{(i)}\x_j^{(i)}  = \x_j^{(i)} + (-\tilde{\bP}_j^{(i)}\x_j^{(i)}).
\end{align} Using the bounds of Lemmas \ref{lem:r}, \ref{lem:inner}, and \ref{lem:RandomProjection} in the Appendix, through non-trivial calculations we obtain:

\begin{thm}[PZF-SSC, probabilistic]\label{thm:PZF-probabilistic}
Consider the random model of Definition \ref{dfn:RandomModel}. Suppose that for each point we do not observe exactly $m<D-d$ entries, with the pattern of missing entries being arbitrary, but otherwise fixed apriori. Suppose that the point density $\rho$ is larger than a constant, and let $\epsilon >0$ be a parameter that controls the probability of success. If the ratio $\omega:=m/D$ of missing entries satisfies
\begin{align}
\alpha> \sqrt{2\omega}+\beta\sqrt{1-\omega}+(1+\beta)\sqrt{\epsilon+\beta/3}, \label{eq:PZF-probabilistic-gap}
\end{align} then there exists a non-empty interval $\Lambda \subset \Re$ such that for any $\lambda \in \Lambda$, any optimal solution to the PZF-SSC problem \eqref{eq:Lasso-PZF} is non-zero and subspace preserving, with probability at least $1-2/N^2-\exp(-\sqrt{\rho} d) - \exp(-D\epsilon/2 )$. 
\end{thm}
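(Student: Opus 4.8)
The plan is to derive the probabilistic statement by verifying, with the claimed probability, the sufficient condition \eqref{eq:PZF-deterministic-gap} of the deterministic Theorem \ref{thm:PZF-deterministic}, namely $\dot{\bar{\mu}}_{\lambda}\,\dot{\bar{\eta}} < \dot{\bar{\zeta}}$. Once this gap holds, Theorem \ref{thm:PZF-deterministic} supplies the non-empty interval $\dot{\bar{\Lambda}} = (1/\dot{\bar{\zeta}}, \dot{\bar{\lambda}}_{\max})$ and the subspace-preserving conclusion for every $\lambda \in \dot{\bar{\Lambda}}$, so that $\Lambda$ may be taken to be $\dot{\bar{\Lambda}}$. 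The task therefore reduces to producing a high-probability lower bound on the right-hand side $\dot{\bar{\zeta}}$ and a high-probability upper bound on the left-hand side $\dot{\bar{\mu}}_{\lambda}\,\dot{\bar{\eta}}$, and then checking that condition \eqref{eq:PZF-probabilistic-gap} forces the former to exceed the latter. A point demanding care (cf. the footnote to Theorem \ref{thm:PZF-deterministic}) is that $\dot{\bar{\Lambda}}$ itself depends on $\lambda$ through $\dot{\bar{\mu}}_{\lambda}$; I would address this by making the bound on $\dot{\bar{\mu}}_{\lambda}$ \emph{uniform in} $\lambda$, so that a single $\lambda$-independent interval is contained in $\dot{\bar{\Lambda}}$ for all admissible $\lambda$ simultaneously.

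I would organize the probabilistic estimates around the three sources of randomness in Definition \ref{dfn:RandomModel}, each contributing one term of the success probability. First, the factor $\dot{\bar{\eta}} = \norm{\bar{\bP}_1^{(1)}\x_1^{(1)}}_2$ is the norm of a uniformly random unit vector restricted to the $D-m$ observed coordinates; by Lemma \ref{lem:RandomProjection} it concentrates near $\sqrt{1-\omega}$, which is the origin of the $\sqrt{1-\omega}$ factor and, together with the $\epsilon$ slack, of the $\exp(-D\epsilon/2)$ term. Second, the inter-subspace quantity $\dot{\bar{\mu}}_{\lambda} = \max_{i>1,k}\abs{\langle \dot{\bar{\x}}_k^{(i)}, \hat{\dot{\bar{\v}}}_{1,\lambda}\rangle}$ involves, after the decomposition $\dot{\bar{\x}}_k^{(i)} = \dot{\x}_k^{(i)} - \dot{\tilde{\x}}_k^{(i)}$, an inner product of the random out-of-subspace point $\x_k^{(i)}$ with the fixed unit direction $\hat{\dot{\bar{\v}}}_{1,\lambda} \in \dot{\cS}_1$, plus a projection-error term; the former is controlled for \emph{every} direction in $\dot{\cS}_1$ by Lemma \ref{lem:inner} (contributing $\beta$ and the $2/N^2$ term), and it is precisely this uniformity over directions in $\dot{\cS}_1$ that also makes the estimate uniform in $\lambda$. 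Third, the lower bound on $\dot{\bar{\zeta}}$ rests on the inradius: by Proposition \ref{prp:r-charecterization} and Lemma \ref{lem:r} the clean intra-subspace coherence satisfies $\zeta \ge r \gtrsim \alpha$ with failure probability $\exp(-\sqrt{\rho}d)$, and this is what places $\alpha$ on the favorable side of \eqref{eq:PZF-probabilistic-gap}.

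The core computation is then to track how the orthogonal projection $\bar{\bP}_1^{(1)}$ transforms each clean quantity into its PZF counterpart, using throughout the identity $\bar{\x}_j^{(i)} = \x_j^{(i)} - \tilde{\bP}_j^{(i)}\x_j^{(i)}$ highlighted before the theorem. Expanding $\dot{\bar{\zeta}} = \max_j\abs{\langle \bar{\x}_j^{(1)}, \bar{\x}_1^{(1)}\rangle}$ (recall $\dot{\bar{\x}}_1^{(1)} = \bar{\x}_1^{(1)}$) yields the clean signal $\langle \x_j^{(1)}, \x_1^{(1)}\rangle$ minus projection-error cross terms of the form $\langle \tilde{\bP}\x, \x\rangle$ and $\langle \tilde{\bP}\x, \tilde{\bP}\x\rangle$; bounding these via Lemma \ref{lem:RandomProjection} produces the $\sqrt{2\omega}$ deduction, and assembling the three bounds shows that on the intersection of the three high-probability events, condition \eqref{eq:PZF-probabilistic-gap} implies $\dot{\bar{\mu}}_{\lambda}\,\dot{\bar{\eta}} < \dot{\bar{\zeta}}$, with total failure probability at most $2/N^2 + \exp(-\sqrt{\rho}d) + \exp(-D\epsilon/2)$ by a union bound. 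An invocation of Theorem \ref{thm:PZF-deterministic} then closes the argument.

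I expect the main obstacle to be the lower bound on $\dot{\bar{\zeta}}$, and more generally the refusal to decouple the projection-induced ``noise'' from the data until the end (point 3 in the discussion preceding Theorem \ref{thm:PZF-deterministic}). The difficulty is that the projection simultaneously shrinks the signal inner product and injects error terms such as $\langle \tilde{\bP}_j^{(1)}\x_j^{(1)}, \x_1^{(1)}\rangle$ that are \emph{not} independent of the signal, since they share the same random points and the same fixed pattern; a crude triangle-inequality decoupling would inflate the $\sqrt{\omega}$ contribution and erase the very advantage over ZF-SSC that the theorem is meant to exhibit. Securing the sharp constant $\sqrt{2\omega}$, keeping the maximum over the index $j$ (and over $i,k$ for $\dot{\bar{\mu}}_{\lambda}$) under control via union bounds, and maintaining uniformity in $\lambda$ simultaneously, is where the delicate use of Lemma \ref{lem:RandomProjection} and of strong duality is indispensable.
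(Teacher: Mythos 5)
Your overall architecture matches the paper's: establish the deterministic gap condition \eqref{eq:PZF-deterministic-gap} with high probability by bounding $\dot{\bar{\eta}}$ via Lemma \ref{lem:RandomProjection}, $\dot{\bar{\mu}}_{\lambda}$ via independence and Lemma \ref{lem:inner}, and $\dot{\bar{\zeta}}$ from below via $\zeta \ge r \ge \alpha$ (Proposition \ref{prp:r-charecterization}, Lemmas \ref{lem:r}) minus a projection error, then union-bound and invoke Theorem \ref{thm:PZF-deterministic}. However, the two estimates that actually produce the constants in \eqref{eq:PZF-probabilistic-gap} would fail as you describe them. For $\dot{\bar{\mu}}_{\lambda}$, you split $\dot{\bar{\x}}_k^{(i)} = \dot{\x}_k^{(i)} - \dot{\tilde{\x}}_k^{(i)}$ and explain how to control only the clean term; the leftover term $\langle \dot{\tilde{\x}}_k^{(i)}, \hat{\dot{\bar{\v}}}_{1,\lambda}\rangle$ is precisely the crude decoupling you warn against in your last paragraph: bounded by Cauchy--Schwarz it contributes $\mathcal{O}(\sqrt{\omega})$, which turns the PZF condition into a ZF-type condition and erases the advantage the theorem asserts, while bounded by a second independence argument it still yields $2\beta$ rather than the claimed $\beta$. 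The paper never decomposes at all: since the pattern projections are diagonal, hence self-adjoint and commuting, it moves them onto the dual direction,
$\abs{\langle \dot{\bar{\x}}_k^{(i)}, \hat{\dot{\bar{\v}}}_{1,\lambda}\rangle} = \abs{\langle \x_k^{(i)}, \bar{\bP}_k^{(i)}\bar{\bP}_1^{(1)}\hat{\dot{\bar{\v}}}_{1,\lambda}\rangle} \le \abs{\langle \x_k^{(i)}, \widehat{\bar{\bP}_1^{(1)}\bar{\bP}_k^{(i)}\hat{\dot{\bar{\v}}}_{1,\lambda}}\rangle}$,
a single inner product with a unit vector determined by the $\cS_1$-data and the patterns only, hence independent of $\x_k^{(i)}$; one application of Lemma \ref{lem:inner} then gives exactly $\beta$ with no $\omega$-penalty. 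The same mechanism yields the sharp $\sqrt{2\omega}$ in the $\dot{\bar{\zeta}}$ bound: writing $\langle \dot{\bar{\x}}_j^{(1)}, \bar{\x}_1^{(1)}\rangle = \langle \x_j^{(1)}, \x_1^{(1)}\rangle - \langle \x_j^{(1)}, (\boldsymbol{I} - \bar{\bP}_j^{(1)}\bar{\bP}_1^{(1)})\x_1^{(1)}\rangle$, the two patterns merge into a \emph{single} orthogonal projection onto a coordinate subspace of dimension at most $2m$, and one application of Lemma \ref{lem:RandomProjection} (evaluated at the index $j^\dagger$ maximizing the complete-data coherence, with a union bound over $j$ since $j^\dagger$ is random) gives the deduction $\sqrt{2\omega}+\sqrt{\epsilon+\beta/3}$. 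Your separate cross terms $\langle\tilde{\bP}\x,\x\rangle$ and $\langle\tilde{\bP}\x,\tilde{\bP}\x\rangle$ would instead give roughly $2\sqrt{\omega}+\omega$, which does not establish the theorem as stated.

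A second, smaller issue: your plan to gain uniformity in $\lambda$ by controlling the inner product ``for every direction in $\dot{\cS}_1$'' overstates Lemma \ref{lem:inner}, which applies to one independent direction. Uniform control over all unit vectors of $\dot{\cS}_1$ amounts to bounding $\norm{\bP_{\dot{\cS}_1}\x_k^{(i)}}_2$, which by Lemma \ref{lem:RandomProjection} concentrates near $\sqrt{d/D}$ --- in general much larger than $\beta = \sqrt{6\log(N)/D}$ --- so this route cannot recover \eqref{eq:PZF-probabilistic-gap}. The paper simply applies Lemma \ref{lem:inner} to the dual direction at the given $\lambda$, i.e., it leaves the self-referential subtlety of the footnote to Theorem \ref{thm:PZF-deterministic} implicit; your instinct points at a real gap in the literature's treatment, but the proposed repair is not compatible with the claimed bound.
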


\noindent To get an insight into how the maximal tolerable level of missing entries scales with the subspace dimension $d$, we note that for high-ambient dimensions $D$ the quantity $\beta$ is negligible with respect to the quantity $\alpha$. Similarly, ignoring the small parameter $\epsilon$, \eqref{eq:PZF-probabilistic-gap} becomes approximately $\alpha \ge \sqrt{2 \omega}$, which by the definition of $\alpha$ and $\omega$ gives
\begin{align}
\text{PZF-SSC}: \, \, \, \frac{m}{D} < \frac{1}{2}\frac{\log(\rho)}{16d}=\mathcal{O}\left(\frac{1}{d}\right). \label{eq:PZF-rate}
\end{align} Informally, \eqref{eq:PZF-rate} says that the maximal tolerable percentage of missing entries of PZF-SSC as predicted by Theorem \ref{thm:PZF-probabilistic}, scales inversely proportionally to the subspace dimension.

\subsection{ZF-SSC Theory} \label{subsection:ZF-SSC}

Similar techniques that led to Theorems \ref{thm:PZF-deterministic} and \ref{thm:PZF-probabilistic} can be employed to yield deterministic and probabilistic statements about ZF-SSC. In particular, we have:

\begin{thm}[ZF-SSC, deterministic] \label{thm:ZF-deterministic}
With the notation of Definition \ref{dfn:BasicObjects}, further define the positive quantity
\small
\begin{align}
\bar{\lambda}_{\max} &:= \frac{1}{2}\Bigg\{\frac{1}{2 \bar{\zeta}} -\frac{\bar{\mu}_{\lambda}}{\bar{\gamma}\bar{\eta}}-\frac{1}{2 \bar{\eta}^2}+ 
\Bigg(\frac{9}{4\bar{\zeta}^2}+\frac{\bar{\mu}_{\lambda}}{\bar{\gamma}\bar{\eta}\bar{\zeta}}+ \nonumber\\
& \frac{2}{\bar{\gamma}\bar{\eta}^2} + \frac{\bar{\mu}_{\lambda}^2}{\bar{\gamma}^2 \bar{\eta}^2}+ \frac{1}{4 \bar{\eta}^4}+
\frac{1}{\bar{\eta}^2}\Big(\frac{\bar{\mu}_{\lambda}}{\bar{\gamma} \bar{\eta}}-
\frac{1}{2 \bar{\zeta}}\Big) \Bigg)^{1/2} \Bigg\}. \label{eq:lambda-u}
\end{align} \normalsize If the condition
\begin{align}
\bar{\mu}_{\lambda} \,\bar{\eta} + \bar{\gamma} < \bar{\zeta} \label{eq:ZF-deterministic-gap}
\end{align} is true, then the open interval $\bar{\Lambda} := (1/\bar{\zeta}, \bar{\lambda}_{\max})$
is non-empty. If in addition $\lambda \in \bar{\Lambda}$, then every optimal solution to the Lasso SSC problem
\eqref{eq:Lasso-ZF} with zero-filled data is non-zero and subspace preserving.
\end{thm}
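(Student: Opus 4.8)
The plan is to follow the dual-certificate methodology pioneered in \cite{Soltanolkotabi:AS12,Wang:JMLR16}, adapted to the zero-filled setting and closely mirroring the argument behind Theorem \ref{thm:PZF-deterministic}. Rather than attacking the full problem \eqref{eq:Lasso-ZF} directly, I would first solve the \emph{reduced} problem \eqref{eq:dfn-LassoReduced} with $\bW=\bar{\bX}$, i.e.\ the Lasso restricted to the zero-filled points $\bar{\bX}_{-1}^{(1)}$ coming from $\cS_1$ alone. Let $\bar{\c}$ be its primal optimizer (trivially subspace preserving, as it is supported on $\cS_1$-indices), $\bar{\e}=\bar{\x}_1^{(1)}-\bar{\bX}_{-1}^{(1)}\bar{\c}$ the residual, and $\bar{\v}^*_{\lambda}=\lambda\bar{\e}$ the unique dual optimizer of \eqref{eq:dfn-LassDualReducedZF}. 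The aim is to certify that the zero-extension of $\bar{\c}$ also solves \eqref{eq:Lasso-ZF} and that no optimal solution charges a point outside $\cS_1$. By strong duality and complementary slackness, it suffices to show strict dual feasibility against every foreign point, i.e.\ $\abs{\langle\bar{\v}^*_{\lambda},\bar{\x}_k^{(i)}\rangle}<1$ for all $i>1$ and $k\in[N_i]$; since the dual optimizer is unique (by strong convexity), every optimal $\c$ then vanishes on foreign points, which is precisely subspace preservation.

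I would dispose of the non-zero claim first: stationarity of \eqref{eq:Lasso-ZF} at $\c=\0$ would force $\lambda\,\norm{\bar{\bX}_{-1}^{\top}\bar{\x}_1^{(1)}}_{\infty}\le 1$, and since this $\ell_\infty$ norm is at least $\bar{\zeta}$, the hypothesis $\lambda>1/\bar{\zeta}$ rules $\c=\0$ out. For subspace preservation, decompose $\bar{\v}^*_{\lambda}=\bP_{\cS_1}\bar{\v}^*_{\lambda}+\bP_{\cS_1^\perp}\bar{\v}^*_{\lambda}$ and split the inner product. The component along $\cS_1$ is controlled by the dual direction, $\abs{\langle\bP_{\cS_1}\bar{\v}^*_{\lambda},\bar{\x}_k^{(i)}\rangle}\le\norm{\bP_{\cS_1}\bar{\v}^*_{\lambda}}_2\,\bar{\mu}_{\lambda}$, directly from the definitions of $\bar{\mu}_{\lambda}$ and $\hat{\bar{\v}}_{1,\lambda}$. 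For the orthogonal component I would use that every complete point lies in $\cS_1$, whence $\bP_{\cS_1^\perp}\bar{\x}_j^{(1)}=-\bP_{\cS_1^\perp}\tilde{\x}_j^{(1)}$; expressing $\bar{\e}$ through the complete data and their unobserved parts yields $\bP_{\cS_1^\perp}\bar{\e}=-\bP_{\cS_1^\perp}\tilde{\x}_1^{(1)}+\sum_j\bar{\c}_j\,\bP_{\cS_1^\perp}\tilde{\x}_j^{(1)}$, so that $\abs{\langle\bP_{\cS_1^\perp}\bar{\v}^*_{\lambda},\bar{\x}_k^{(i)}\rangle}\le\lambda\,\bar{\gamma}\,(1+\norm{\bar{\c}}_1)$ by the very definition \eqref{eq:gamma-ZF} of $\bar{\gamma}$. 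This is the step where ``not decoupling the noise from the data'' pays off: the unobserved parts are kept inside the inner products until absorbed by $\bar{\gamma}$.

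It then remains to bound the two data-dependent factors $\norm{\bP_{\cS_1}\bar{\v}^*_{\lambda}}_2$ and $\norm{\bar{\c}}_1$. Comparing objectives against $\c=\0$ gives the crude estimates $\norm{\bar{\v}^*_{\lambda}}_2\le\lambda\bar{\eta}$ and $\norm{\bar{\c}}_1\le\tfrac{\lambda}{2}\bar{\eta}^2$, but I would instead invoke the sharper Lemma \ref{lem:New-vUB} to bound $\norm{\bP_{\cS_1}\bar{\v}^*_{\lambda}}_2$, which is what enlarges the tolerable regime. Substituting into $\abs{\langle\bar{\v}^*_{\lambda},\bar{\x}_k^{(i)}\rangle}\le\norm{\bP_{\cS_1}\bar{\v}^*_{\lambda}}_2\,\bar{\mu}_{\lambda}+\lambda\bar{\gamma}(1+\norm{\bar{\c}}_1)<1$ produces a quadratic inequality in $\lambda$, whose positive root is exactly $\bar{\lambda}_{\max}$ of \eqref{eq:lambda-u}; thus the requirement holds precisely for $\lambda<\bar{\lambda}_{\max}$. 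Evaluating the same inequality at the left endpoint $\lambda=1/\bar{\zeta}$ reduces, after simplification, to $\bar{\mu}_{\lambda}\bar{\eta}+\bar{\gamma}<\bar{\zeta}$, which is hypothesis \eqref{eq:ZF-deterministic-gap} and guarantees $\bar{\lambda}_{\max}>1/\bar{\zeta}$, i.e.\ that $\bar{\Lambda}$ is non-empty.

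I expect the main obstacle to be twofold and essentially technical: first, obtaining a bound on $\bP_{\cS_1}\bar{\v}^*_{\lambda}$ tight enough (via Lemma \ref{lem:New-vUB} together with the strong-duality identity relating primal and dual objectives) that the extra $\bar{\gamma}$-term — absent in the PZF analysis — does not destroy the guarantee; and second, the careful algebra of solving the resulting quadratic in $\lambda$ to recover the closed form \eqref{eq:lambda-u} and to confirm that the admissible interval is non-empty exactly under \eqref{eq:ZF-deterministic-gap}. The conceptual content is otherwise modest: it is the same reduce-then-certify scheme as for PZF-SSC, the single genuine difference being that zero-filling leaves a residual of the point being expressed outside $\cS_1$, which surfaces as the additional term $\bar{\gamma}$ in both the feasibility condition and the formula for $\bar{\lambda}_{\max}$.
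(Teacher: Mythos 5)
Your proposal is correct and follows essentially the same route as the paper's proof: reduce to the Lasso over $\bar{\bX}_{-1}^{(1)}$, certify via the dual vector $\bar{\v}^*_{\lambda}=\lambda\bar{\e}^*$ split into its $\cS_1$ and $\cS_1^\perp$ components, absorb the unobserved parts into $\bar{\gamma}$ using $\bP_{\cS_1^\perp}\bar{\x}_j^{(1)}=-\bP_{\cS_1^\perp}\tilde{\x}_j^{(1)}$, bound $\norm{\bar{\c}^*}_1$ by strong duality and the dual norm by Lemma \ref{lem:New-vUB}, and solve the resulting quadratic in $\lambda$, whose value at $\lambda=1/\bar{\zeta}$ is negative exactly under \eqref{eq:ZF-deterministic-gap}. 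The only cosmetic difference is that you re-derive the dual-certificate criterion from strong duality and complementary slackness, where the paper invokes Lemma $12$ of \cite{Wang:JMLR16}.
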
 

\noindent The quantities $\bar{\mu}_{\lambda}, \, \bar{\eta}, \, \bar{\zeta}$ are in direct analogy with the quantities $\dot{\bar{\mu}}_{\lambda}, \, \dot{\bar{\eta}}, \, \dot{\bar{\zeta}}$ that appeared in Theorem \ref{thm:PZF-deterministic}, except that now they are defined in terms of ZF data instead of PZF data. In fact, as seen from their definitions in \eqref{eq:eta} and \eqref{eq:zeta}, $\bar{\eta} = \dot{\bar{\eta}}$ and $\bar{\zeta}=\dot{\bar{\zeta}}$, while in principle the inter-subspace coherences $\bar{\mu}_{\lambda}, \dot{\bar{\mu}}_{\lambda}$ need not coincide. Instead, the main difference between \eqref{eq:ZF-deterministic-gap} and \eqref{eq:PZF-deterministic-gap} is the appearance of the quantity $\bar{\gamma}$, whose PZF counterpart $\dot{\bar{\gamma}}$ appears in Theorem \ref{thm:PZF-deterministic} only in the definition of the allowable interval for $\lambda$. 

The quantity $\bar{\gamma}$ admits an interesting interpretation: As seen from its definition in \eqref{eq:gamma-ZF}, $\bar{\gamma}$ is capturing the coherence between the ZF data $\bar{\bX}^{(-1)}$ associated to subspaces $\cS_i, i>1$, and a projected version of the unobserved components $\tilde{\bX}_{-1}^{(1)}$ of the data from $\cS_1$. A large such coherence intuitively means that significant information about $\cS_1$, potentially crucial for the reconstruction of $\bar{\x}_1^{(1)}$ as a linear combination of points in $\bar{\bX}_{-1}$, is \emph{leaked away} into $\tilde{\bX}_{-1}^{(1)}$, with which $\bar{\bX}^{(-1)}$ highly correlates (assuming large $\bar{\gamma}$). In turn, this may lead the optimization problem to favor points of $\bar{\bX}^{(-1)}$ in expressing $\bar{\x}_1^{(1)}$, thus leading to the loss of the subspace-preserving property of the solutions to \eqref{eq:Lasso-ZF}. 

Interestingly, comparison of the proofs of Theorems \ref{thm:PZF-deterministic} and \ref{thm:ZF-deterministic} reveals that $\dot{\bar{\gamma}}$ did not appear in \eqref{eq:PZF-deterministic-gap} because $\bar{\x}_1^{(1)}$ is complete when the underlying subspace arrangement is taken to be $\bigcup_{i=1}^n \dot{\cS}_i$, which is the natural view that we adopted for our analysis of PZF-SSC. On the contrary, such a feature is not available in the analysis of ZF-SSC, as $\bar{\x}_1^{(1)}$ is in principle incomplete with respect to $\bigcup_{i=1}^n \cS_i$. 

\noindent As we did for PZF-SSC, we use the deterministic Theorem \ref{thm:ZF-deterministic} to derive a probabilistic statement:

\begin{thm}[ZF-SSC, probabilistic]\label{thm:ZF-probabilistic}
Consider the exact setting of Theorem \ref{thm:PZF-probabilistic}. If the ratio 
$\omega:=m/D$ of missing entries satisfies
\small
\begin{align}
\alpha>& (\sqrt{2}+\sqrt{\epsilon+\beta/3}) \sqrt{\omega}+(\beta+\sqrt{\epsilon+\beta/3})\sqrt{1-\omega}+ \nonumber\\
& \sqrt{\omega(1-\omega)}+(1+\beta+\sqrt{\epsilon+\beta/3})\sqrt{\epsilon+\beta/3}, \label{eq:ZF-probabilistic-gap}
\end{align} \normalsize then there exists a non-empty interval $\Lambda \subset \Re$ such that for any $\lambda \in \Lambda$, any optimal solution to the ZF-SSC problem \eqref{eq:Lasso-ZF} is non-zero and subspace preserving, with probability at least $1-2/N^2-\exp(-\sqrt{\rho} d) - \exp\Big(-D\epsilon/2 \Big)$. 
\end{thm}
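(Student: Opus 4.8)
The plan is to deduce this probabilistic statement from the deterministic Theorem \ref{thm:ZF-deterministic} by controlling, with high probability under the random model of Definition \ref{dfn:RandomModel}, each of the four quantities $\bar{\mu}_{\lambda}$, $\bar{\eta}$, $\bar{\gamma}$, $\bar{\zeta}$ entering the deterministic certificate $\bar{\mu}_{\lambda}\,\bar{\eta} + \bar{\gamma} < \bar{\zeta}$ of \eqref{eq:ZF-deterministic-gap}. The engine for all of these bounds is the observation, emphasized in the text preceding Theorem \ref{thm:PZF-probabilistic}, that the missing-entry corruption acts as an orthogonal projection, so that $\bar{\x}_j^{(i)} = \x_j^{(i)} - \tilde{\bP}_j^{(i)}\x_j^{(i)}$. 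Substituting this decomposition into the defining inner products of $\bar{\mu}_{\lambda}$, $\bar{\zeta}$, $\bar{\eta}$ and $\bar{\gamma}$ expands each into a ``complete-data'' term, which I would control exactly as in the uncorrupted random model, plus cross terms and a purely ``unobserved'' term, each carrying one or two factors of $\tilde{\bP}$. The goal is to show that the resulting high-probability bounds make \eqref{eq:ZF-probabilistic-gap} a sufficient condition for \eqref{eq:ZF-deterministic-gap}.

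Concretely, I would first lower bound the right-hand side $\bar{\zeta}$: its complete-data part is the intra-subspace coherence $\zeta$, which dominates the inradius by the bound $r \le \zeta$ of Proposition \ref{prp:r-charecterization} and hence is bounded below by a $\alpha$-scale quantity via Lemma \ref{lem:r}; the correction terms, built from $\tilde{\bP}_1^{(1)}$ applied to unit vectors of $\cS_1$, are bounded above using Lemma \ref{lem:RandomProjection}, contributing the $\sqrt{\omega}$, $\sqrt{1-\omega}$ and $\sqrt{\omega(1-\omega)}$ scales together with the $\sqrt{\epsilon+\beta/3}$ slack whose tail governs the $\exp(-D\epsilon/2)$ probability. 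Symmetrically, I would upper bound the left-hand side: for $\bar{\mu}_{\lambda}$ the complete-data inner product between a point of $\cS_i$, $i>1$, and the dual direction in $\cS_1$ is controlled by Lemma \ref{lem:inner}, yielding a $\beta$-scale bound, while the projection cross terms feed $\sqrt{\omega}$-type contributions; the factor $\bar{\eta} = \norm{\bar{\x}_1^{(1)}}_2$ concentrates around $\sqrt{1-\omega}$ by Lemma \ref{lem:RandomProjection}, so that $\bar{\mu}_{\lambda}\bar{\eta}$ carries the $\beta\sqrt{1-\omega}$ scale; and the genuinely new quantity $\bar{\gamma}$, which measures the leakage of $\cS_1$-information into the unobserved directions orthogonal to $\cS_1$, is bounded by combining Lemma \ref{lem:inner} with Lemma \ref{lem:RandomProjection}, supplying precisely the extra $\sqrt{\omega(1-\omega)}$ and additional $\sqrt{\epsilon+\beta/3}$ summands that distinguish \eqref{eq:ZF-probabilistic-gap} from its PZF analogue \eqref{eq:PZF-probabilistic-gap}. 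Collecting these and rearranging so that the upper bound on $\bar{\mu}_{\lambda}\bar{\eta}+\bar{\gamma}$ lies strictly below the lower bound on $\bar{\zeta}$ is exactly \eqref{eq:ZF-probabilistic-gap}.

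Having established \eqref{eq:ZF-deterministic-gap} on the intersection of the high-probability events, Theorem \ref{thm:ZF-deterministic} then yields the non-empty interval $\bar{\Lambda}$ and the non-zero, subspace-preserving conclusion for any $\lambda \in \bar{\Lambda}$, and a union bound over the failure events of Lemmas \ref{lem:r}, \ref{lem:inner} and \ref{lem:RandomProjection} produces the stated probability $1 - 2/N^2 - \exp(-\sqrt{\rho}d) - \exp(-D\epsilon/2)$. I anticipate two main obstacles. First, $\bar{\mu}_{\lambda}$ is defined through the dual direction $\hat{\bar{\v}}_{1,\lambda}$, which itself depends on $\lambda$ and on all of the ZF data, so I must argue that the bound on $\bar{\mu}_{\lambda}$ holds uniformly over the relevant range of $\lambda$ (cf.\ the caution of Theorem \ref{thm:PZF-deterministic} against treating the interval as $\lambda$-independent), so that a genuine $\lambda \in \Lambda$ survives. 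Second, and more delicate, is the careful accounting of the projection cross terms for $\bar{\zeta}$ and $\bar{\gamma}$: since $\tilde{\bP}_1^{(1)}$ depends only on the fixed observation pattern while being independent of the randomly drawn subspaces, the concentration of Lemma \ref{lem:RandomProjection} must be applied to the random unit vectors conditioned on the pattern, and it is exactly the combinatorics of which terms scale like $\sqrt{\omega}$, $\sqrt{1-\omega}$, or $\sqrt{\omega(1-\omega)}$ that must be tracked to recover \eqref{eq:ZF-probabilistic-gap} with the correct constants.
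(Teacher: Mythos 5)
Your proposal is correct and follows essentially the same route as the paper: invoke the deterministic Theorem \ref{thm:ZF-deterministic}, then bound with high probability $\bar{\zeta} \ge r - \sqrt{2\omega} - \sqrt{\epsilon+\beta/3}$ (via Proposition \ref{prp:r-charecterization}, Lemma \ref{lem:r} and Lemma \ref{lem:RandomProjection}), $\bar{\mu}_{\lambda} \le \beta$ (via Lemma \ref{lem:inner} and independence), $\bar{\eta} \le \sqrt{1-\omega}+\sqrt{\epsilon+\beta/3}$, and $\bar{\gamma} \le (\sqrt{1-\omega}+\sqrt{\epsilon+\beta/3})(\sqrt{\omega}+\sqrt{\epsilon+\beta/3})$, and finish with a union bound, exactly as the paper does. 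The only cosmetic deviation is your attribution of the $\bar{\gamma}$ bound to Lemma \ref{lem:inner}: the paper obtains it by Cauchy--Schwarz plus two applications of Lemma \ref{lem:RandomProjection}, which is precisely what produces the $\sqrt{\omega(1-\omega)}$ summand you correctly anticipate.
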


\subsection{A Comparison between PZF-SSC and ZF-SSC} \label{subsection:Comparison}

\begin{figure}
\centering	
		
\includegraphics[width=1\linewidth]{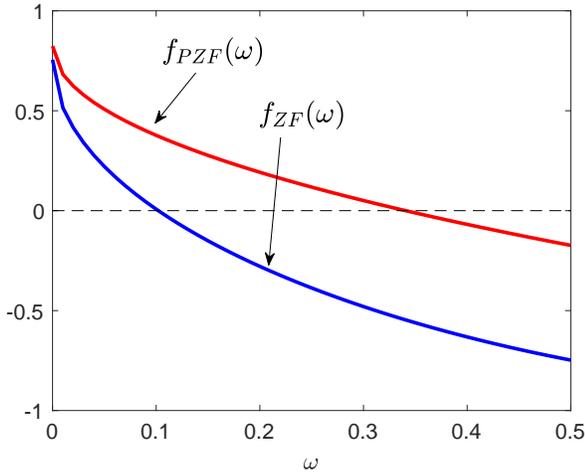}
\caption{{\bf Theoretical comparison between PZF-SSC and ZF-SSC:} The functions $f_{\text{PZF}}(\omega)$ and $f_{\text{ZF}}(\omega)$ defined in \eqref{eq:f-PZF} and \eqref{eq:f-ZF} respectively are plotted for $\alpha=0.9, \beta=0.01, \epsilon=0.001$.}
		
\label{fig:f}
\end{figure} 

It is worth mentioning that repeating the informal arguments that led to \eqref{eq:PZF-rate}, i.e., for high ambient dimension $D$ ignoring $\beta$ and $\epsilon$, \eqref{eq:ZF-probabilistic-gap} becomes 
$\alpha> \sqrt{2\omega}+\sqrt{\omega(1-\omega)}$. Noting that $\sqrt{\omega} \ge \sqrt{\omega(1-\omega)}$, we then have that this latter simplified condition is satisfied if the stronger condition $\alpha > (1+\sqrt{2}) \sqrt{\omega}$ is true. In turn, this implies that 
\begin{align}
\text{ZF-SSC}:\, \, \, \frac{m}{D} < \frac{1}{(1+\sqrt{2})^2}\frac{\log(\rho)}{16d}=\mathcal{O}\left(\frac{1}{d}\right). \label{eq:ZF-rate}
\end{align} In other words, both PZF-SSC and ZF-SSC give subspace preserving solutions as long as the ratio of missing entries scales as \footnote{This result is in agreement with the work of \cite{Charles:arXiv18}, who however only study ZF-SSC.} $1/d$. On the other hand, the multiplying constant associated to PZF-SSC is about $3$ times larger, a significant difference. Alternatively (and more rigorously), defining
\begin{align}
f_{\text{PZF}}(\omega) :=& \alpha - \sqrt{2\omega}-\beta\sqrt{1-\omega}\nonumber \\
&-(1+\beta)\sqrt{\epsilon+\beta/3}, \label{eq:f-PZF}
\end{align}  the PZF Theorem \ref{thm:PZF-probabilistic} asks that
\begin{align}
f_{\text{PZF}}(\omega)> 0, 
\end{align} while the ZF Theorem \ref{thm:ZF-probabilistic} asks that
\begin{align}
f_{\text{ZF}}(\omega):=& -\sqrt{\epsilon+\beta/3}(\sqrt{\omega}+\sqrt{1-\omega}+\sqrt{\epsilon+\beta/3})  \nonumber \\
& - \sqrt{\omega(1-\omega)} + f_{\text{PZF}}(\omega)  > 0. \label{eq:f-ZF}
\end{align} To appreciate the difference between the two functions $f_{\text{PZF}}(\omega)$ and 
$f_{\text{ZF}}(\omega)$, we plot them in Fig. \ref{fig:f}\footnote{We note here that the probabilistic lower bound $\alpha$ on the inradius $r$, even though of fundamental theoretical importance, is overly pessimistic for realistic values of $\rho, d$. Instead, we have used here the more realistic value $\alpha=0.9$. At any case, as seen by the definition of $f_{\text{PZF}}(\omega)$ and 
$f_{\text{ZF}}(\omega)$ in \eqref{eq:f-PZF} and \eqref{eq:f-ZF} respectively, the value of $\alpha$ does not in any way affect the behavior of the two functions, other than a uniform vertical translation. For an alternative bound see \cite{You:ICML15}.}, which suggests the superiority of PZF-SSC.

\section{New Results for Generic Noise and Uncorrupted Data} \label{section:OtherImplications}

Following the same methodology that led to Theorem \ref{thm:PZF-deterministic}, but now for data corrupted by generic, potentially adversarial, bounded noise, we obtain the following result:

\begin{thm}[SSC with generic noise, deterministic] \label{thm:SSCgenericNoise}
Given data $\bX$ as described in Definition \ref{dfn:BasicObjects}, suppose that generic deterministic noise $\boldsymbol{\delta}_k^{(i)}$ of maximum Euclidean norm $\delta$ is added at every point $\x_k^{(i)}$. For a given $\lambda$, let $\hat{\v}_{1,\lambda}'$ be the normalized projection onto $\cS_1$ of the optimal solution to the reduced dual problem associated to the noisy points $\x_j'^{(1)}:=\x_j^{(1)} + \boldsymbol{\delta}_j^{(1)}$ coming from subspace $\cS_1$ (i.e., use $\x_j'^{(1)}$ in \eqref{eq:dfn-LassDualReducedZF}, then compute the dual direction as for incomplete data). Let $\mu_{\lambda}'$ be the maximal absolute inner product between $\hat{\v}_{1,\lambda}'$ and points $\x_k^{(i)}, \, i>1$. If $r> \mu_{\lambda}'$ and if  
\begin{align}
\delta < r+\frac{\mu_{\lambda}'}{3} - \sqrt{\Big(r+\frac{\mu_{\lambda}'}{3}\Big)^2-\frac{r^2-(\mu_{\lambda}')^2}{3}}, \label{eq:NewNoisySSC-deterministic}
\end{align} then there exists a non-empty open interval $\Lambda'_{\lambda}$, such that if $\lambda \in \Lambda'_{\lambda}$, then every optimal solution to the Lasso SSC problem of expressing the noisy point $\x_1^{(1)} + \boldsymbol{\delta}_1^{(1)}$ in terms of all other noisy points, will be non-zero and subspace preserving. In particular, condition \eqref{eq:NewNoisySSC-deterministic} is satisfied if 
\begin{align}
\delta < \frac{r-\mu_{\lambda}'}{6}. \label{eq:NewNoisySSC-deterministic2}
\end{align}
\end{thm}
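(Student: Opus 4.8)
The plan is to follow the primal--dual certificate strategy used for Theorem~\ref{thm:PZF-deterministic}, but carried out directly on the \emph{noisy} data and keeping the noise coupled to the clean points until the very last step. First I would form the reduced Lasso problem \eqref{eq:dfn-LassoReduced} in which the noisy point $\x_1'^{(1)}=\x_1^{(1)}+\boldsymbol{\delta}_1^{(1)}$ is expressed using only the noisy in-subspace dictionary $\bX_{-1}'^{(1)}$, and let $\v^*$ be its unique dual optimum, so that $\hat{\v}_{1,\lambda}'$ is the normalized projection of $\v^*$ onto $\cS_1$. The standard optimality argument then says that the reduced primal solution, extended by zeros on the out-of-subspace indices, is an optimal solution of the full all-points problem, and that every optimal solution is non-zero and subspace preserving, provided (a) a lower threshold on $\lambda$ forces the reduced solution to be non-zero (the left endpoint of $\Lambda_\lambda'$, an analogue of $1/\zeta<\lambda$ from Theorem~\ref{thm:SSCuncorruptedOld}, governed by the intra-subspace coherence), and (b) the dual feasibility for every competing point is \emph{strict}, i.e.\ $\max_{i>1,k}\,\lvert\langle \v^*,\x_k'^{(i)}\rangle\rvert<1$.

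The heart of the proof is verifying (b). I would decompose both the dual vector and each competing point as
\begin{align}
\langle \v^*,\x_k'^{(i)}\rangle
= \langle \bP_{\cS_1}\v^*,\x_k^{(i)}\rangle
+ \langle \bP_{\cS_1^\perp}\v^*,\x_k^{(i)}\rangle
+ \langle \v^*,\boldsymbol{\delta}_k^{(i)}\rangle ,
\end{align}
and bound the three terms separately. The first is controlled by the inter-subspace coherence: writing $\bP_{\cS_1}\v^*=\norm{\bP_{\cS_1}\v^*}_2\,\hat{\v}_{1,\lambda}'$ gives $\lvert\langle \bP_{\cS_1}\v^*,\x_k^{(i)}\rangle\rvert\le \norm{\bP_{\cS_1}\v^*}_2\,\mu_\lambda'$, and the feasibility constraint $\norm{(\bX_{-1}'^{(1)})^\transpose\v^*}_{\infty}\le1$ together with the inradius characterization (Proposition~\ref{prp:r-charecterization} and Lemma~\ref{lem:r}) bounds $\norm{\bP_{\cS_1}\v^*}_2$ by essentially $1/r$. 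The third term is at most $\delta\norm{\v^*}_2$, for which I would invoke Lemma~\ref{lem:New-vUB} to control $\norm{\v^*}_2$. The second, orthogonal term is a pure artifact of the noise: since $\v^*\in\cS_1$ when $\delta=0$, the component $\bP_{\cS_1^\perp}\v^*$ vanishes with $\delta$, and I would bound $\norm{\bP_{\cS_1^\perp}\v^*}_2$ by a quantity proportional to $\delta$ via the primal--dual link $\v^*=\lambda\e^*$ and the constraint structure.

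Assembling these estimates turns the requirement $\lvert\langle \v^*,\x_k'^{(i)}\rangle\rvert<1$ into a single scalar inequality in which $\delta$ enters both multiplicatively (through $\delta\norm{\v^*}_2$ and $\norm{\bP_{\cS_1^\perp}\v^*}_2$) and additively. Collecting terms, feasibility becomes equivalent to $\delta$ lying below the smaller root of the quadratic $\delta^2-2(r+\mu_\lambda'/3)\delta+\tfrac13\big(r^2-(\mu_\lambda')^2\big)$, which is exactly the right-hand side of \eqref{eq:NewNoisySSC-deterministic}; this root is positive, so the admissible range for $\delta$ is non-empty, precisely when $r>\mu_\lambda'$. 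The simplified sufficient condition \eqref{eq:NewNoisySSC-deterministic2} then follows from the elementary estimate $b-\sqrt{b^2-c}=\tfrac{c}{b+\sqrt{b^2-c}}\ge \tfrac{c}{2b}$, applied with $b=r+\mu_\lambda'/3$ and $c=\tfrac13\big(r^2-(\mu_\lambda')^2\big)$, since $\tfrac{c}{2b}=\frac{(r-\mu_\lambda')(r+\mu_\lambda')}{6(r+\mu_\lambda'/3)}\ge \frac{r-\mu_\lambda'}{6}$.

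The main obstacle I anticipate is twofold: obtaining the correct $\delta$-linear bound on the orthogonal dual component $\norm{\bP_{\cS_1^\perp}\v^*}_2$ together with a matching bound on $\norm{\v^*}_2$ from Lemma~\ref{lem:New-vUB}, tight enough to reproduce the exact coefficients $r+\mu_\lambda'/3$ and $\tfrac13(r^2-(\mu_\lambda')^2)$ of the quadratic. This is where not decoupling the noise from the data is essential, since a crude triangle-inequality split would inflate the constants and destroy the clean structure leading to \eqref{eq:NewNoisySSC-deterministic}. A secondary technical point is certifying that the left endpoint of $\Lambda_\lambda'$ (the non-zero-solution threshold) stays below the upper endpoint dictated by feasibility, so that $\Lambda_\lambda'$ is genuinely non-empty under the stated bound on $\delta$.
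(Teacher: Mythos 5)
Your overall architecture is sound (reduced problem, strict dual feasibility via Lemma 12 of \cite{Wang:JMLR16}, a lower threshold on $\lambda$ for non-triviality, and the final relaxation $b-\sqrt{b^2-c}=c/(b+\sqrt{b^2-c})\ge c/(2b)$, which does correctly yield \eqref{eq:NewNoisySSC-deterministic2} from \eqref{eq:NewNoisySSC-deterministic}), but the central quantitative step is asserted rather than derived, and the specific route you propose cannot produce the stated constants. The problem is your bound $\norm{\bP_{\cS_1}\v^*}_2\lesssim 1/r$ via the polar-set/circumradius duality: this is exactly the older technique of Theorem \ref{thm:SSCuncorruptedOld} and of Theorem 6 in \cite{Wang:JMLR16}, and it makes the inradius enter the certificate multiplicatively, through the ratio $\mu_{\lambda}'/r$. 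Pushing your three bounds through (e.g.\ $\norm{\bP_{\cS_1}\v^*}_2\le (1+\delta\norm{\v^*}_2)/r$, $\norm{\bP_{\cS_1^\perp}\v^*}_2\le\lambda\delta(1+\norm{\c^*}_1)$, Lemma \ref{lem:New-vUB} for $\norm{\v^*}_2$) gives, to leading order in $\delta$, a condition of the shape $\delta\lesssim r(r-\mu_{\lambda}')/(2r+\mu_{\lambda}')$ --- the same $r(r-\mu_{\lambda}')$ numerator structure as the Wang--Xu bound that this theorem is designed to improve --- whereas the stated quadratic $\delta^2-2(r+\mu_{\lambda}'/3)\delta+\tfrac13(r^2-(\mu_{\lambda}')^2)$ has leading behavior $(r-\mu_{\lambda}')(r+\mu_{\lambda}')/(2(3r+\mu_{\lambda}'))$. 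These are genuinely different functions of $(r,\mu_{\lambda}')$, so ``feasibility becomes equivalent to [the stated quadratic]'' is not reachable along your path; moreover even in the correct derivation the quadratic is a \emph{strictly stronger} sufficient condition (obtained by discarding dominated quartic and cubic terms), not an equivalence. You flagged this as your main obstacle, but it is not a matter of tightening constants inside your scheme: the scheme itself puts $r$ in the wrong place.

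For comparison, the paper's proof is a reduction, not a fresh certificate argument: the generic-noise problem is formally an instance of ZF-SSC with $\bar{\x}_k^{(i)}=\x_k^{(i)}+\boldsymbol{\delta}_k^{(i)}$ playing the role of the observed data and $\tilde{\x}_k^{(i)}=-\boldsymbol{\delta}_k^{(i)}$ the role of the unobserved component, so Theorem \ref{thm:ZF-deterministic} applies verbatim and the task becomes bounding the four quantities in \eqref{eq:ZF-deterministic-gap}. One gets $\bar{\mu}_{\lambda}\le\mu_{\lambda}'+\delta$, $\bar{\eta}\le\sqrt{1+\delta^2}$, $\bar{\gamma}\le\delta+\delta^2$, and --- the only place the inradius enters --- $\bar{\zeta}\ge\zeta-2\delta-\delta^2\ge r-2\delta-\delta^2$, using $\zeta \ge r$ from Proposition \ref{prp:r-charecterization}. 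Then $\bar{\mu}_{\lambda}\bar{\eta}+\bar{\gamma}<\bar{\zeta}$ is implied by $r-3\delta-2\delta^2>\sqrt{1+\delta^2}(\mu_{\lambda}'+\delta)$; squaring gives a quartic in $\delta$, which is relaxed to $-3\delta^2+2(\mu_{\lambda}'+3r)\delta+(\mu_{\lambda}')^2-r^2<0$, whose smaller positive root is exactly the right-hand side of \eqref{eq:NewNoisySSC-deterministic}. In other words, the dual vector's norm is controlled exclusively by Lemma \ref{lem:New-vUB} (through $\bar{\zeta}$ and $\bar{\eta}$), never by $1/r$. To repair your argument, keep the noisy competing point $\bar{\x}_k^{(i)}$ intact rather than splitting off $\boldsymbol{\delta}_k^{(i)}$, bound $\abs{\langle\bar{\x}_k^{(i)},\bP_{\cS_1}\bar{\v}^*\rangle}\le\bar{\mu}_{\lambda}\norm{\bP_{\cS_1}\bar{\v}^*}_2$ with $\bar{\mu}_{\lambda}\le\mu_{\lambda}'+\delta$ and Lemma \ref{lem:New-vUB}, and let $r$ appear only through the lower bound on the intra-subspace coherence $\bar{\zeta}$.
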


\noindent Theorem \ref{thm:SSCgenericNoise} is an improvement over Theorem $6$ in \cite{Wang:JMLR16}, which restricts $\delta$ as in
\begin{align}
\delta < \frac{r(r-\mu_{\lambda}')}{2+7r}.
\end{align} For Gaussian noise, it can be shown that Theorem \ref{thm:SSCgenericNoise} allows the variance of the noise to scale roughly as per $1/\sqrt{d}$, as opposed to $1/d$ for Theorem $11$ in \cite{Wang:JMLR16}. Even though this is an improvement over the latter, it is still not better than 
\cite{Soltanolkotabi:AS14}, which analyzes SSC with an adaptive tuning of $\lambda$, thus removing the dependence of the subspace dimension $d$ from the tolerable noise variance.

We conclude the paper with a result interesting in its own right, obtained from Theorem \ref{thm:PZF-deterministic} for zero missing entries:

\begin{thm}[SSC with uncorrupted data, deterministic] \label{thm:SSCuncorrupted}
Consider expressing point $\x_1^{(1)}$ in terms of the rest of the points in $\bX$ via the Lasso SSC formulation \eqref{eq:SSC-Lasso}. If  $\mu_{\lambda} < \zeta$ then the open interval 
\begin{align}
\Lambda_{\lambda} :=\Bigg(\frac{1}{\zeta},\frac{1}{2} \Bigg(\frac{1}{\zeta} + \frac{1}{\mu_{\lambda}} \Bigg) \Bigg) \label{eq:SSC-new-lambda-UB}
\end{align} is non-empty, and if $\lambda \in \Lambda_{\lambda}$, then any optimal solution is non-zero and subspace preserving.
\end{thm}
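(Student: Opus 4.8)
The plan is to obtain Theorem \ref{thm:SSCuncorrupted} as the degenerate specialization of the PZF Theorem \ref{thm:PZF-deterministic} to the case of \emph{zero} missing entries, exactly as the surrounding text advertises. When no entry is missing we have $m=0$, so every observation pattern equals $\1$ and $\bar{\bP}_1^{(1)}=\I$. Consequently the projected-and-zero-filled data coincide with the complete data, $\dot{\bar{\bX}}=\bX$, the projected subspaces coincide with the original ones, $\dot{\cS}_i=\cS_i$, and the PZF-SSC program \eqref{eq:Lasso-PZF} is literally the uncorrupted Lasso-SSC program \eqref{eq:SSC-Lasso}. Thus it suffices to read off what Theorem \ref{thm:PZF-deterministic} asserts under these identifications.

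Next I would specialize each decorated quantity. Since $\x_1^{(1)}$ is a unit vector, $\dot{\bar{\eta}}=\norm{\x_1^{(1)}}_2=1$; by their very definitions $\dot{\bar{\zeta}}=\zeta$ and $\dot{\bar{\mu}}_\lambda=\mu_\lambda$; and, crucially, $\dot{\bar{\gamma}}=0$, because the projected unobserved components vanish, $\dot{\tilde{\x}}_j^{(1)}=\tilde{\bP}_1^{(1)}\x_j^{(1)}=\0$ when $\tilde{\bP}_1^{(1)}=\0$. With these substitutions the gap condition \eqref{eq:PZF-deterministic-gap}, namely $\dot{\bar{\mu}}_\lambda\,\dot{\bar{\eta}}<\dot{\bar{\zeta}}$, collapses precisely to the hypothesis $\mu_\lambda<\zeta$ of Theorem \ref{thm:SSCuncorrupted}, which is exactly the condition that makes the claimed interval \eqref{eq:SSC-new-lambda-UB} non-empty.

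It remains to identify the right endpoint of the allowable interval, and this is where the only real subtlety lies. The formula defining $\dot{\bar{\lambda}}_{\max}$ in Theorem \ref{thm:PZF-deterministic} divides by $\dot{\bar{\gamma}}$, so it is not literally applicable at $\dot{\bar{\gamma}}=0$; I would instead compute its limit as $\dot{\bar{\gamma}}\to 0^+$. Writing $g:=\dot{\bar{\gamma}}$ and inserting $\dot{\bar{\eta}}=1,\ \dot{\bar{\zeta}}=\zeta,\ \dot{\bar{\mu}}_\lambda=\mu_\lambda$, the square-root term is $\sqrt{\mu_\lambda^2/g^2+g^{-1}(\mu_\lambda/\zeta+2)+9/(4\zeta^2)}$; factoring out $\mu_\lambda/g$ and expanding to first order shows it equals $\mu_\lambda/g+\tfrac{1}{2\zeta}+\tfrac{1}{\mu_\lambda}+O(g)$, so the singular $-\mu_\lambda/g$ term cancels and $\dot{\bar{\lambda}}_{\max}\to\tfrac12\big(1/\zeta+1/\mu_\lambda\big)$. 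Hence the interval $\dot{\bar{\Lambda}}=(1/\dot{\bar{\zeta}},\dot{\bar{\lambda}}_{\max})$ tends to $\Lambda_\lambda=\big(1/\zeta,\tfrac12(1/\zeta+1/\mu_\lambda)\big)$, matching \eqref{eq:SSC-new-lambda-UB} exactly.

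The main obstacle, and the step I would treat most carefully, is justifying that the limiting endpoint is genuinely \emph{attained} rather than merely approached; that is, that the dual certificate constructed in the proof of Theorem \ref{thm:PZF-deterministic} stays feasible at $\dot{\bar{\gamma}}=0$. The clean way to see this is to revisit the inequality in that proof from which $\dot{\bar{\lambda}}_{\max}$ is extracted: the terms carrying $\dot{\bar{\gamma}}$ enter only the quadratic constraint producing the \emph{upper} bound on $\lambda$, whereas feasibility of the certificate against the other-subspace points $\x_k^{(i)},\,i>1$, is governed solely by $\dot{\bar{\mu}}_\lambda$, which survives the specialization. Setting $\dot{\bar{\gamma}}=0$ therefore only relaxes that quadratic to a linear inequality in $\lambda$; solving the latter reproduces the endpoint $\tfrac12(1/\zeta+1/\mu_\lambda)$ directly, with no division by zero, and the subspace-preserving, non-zero conclusion of Theorem \ref{thm:PZF-deterministic} transfers verbatim to \eqref{eq:SSC-Lasso}. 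As a sanity check, the resulting condition $\mu_\lambda<\zeta$ is weaker than the condition $\mu_\lambda<r$ of Theorem \ref{thm:SSCuncorruptedOld}, since $r\le\zeta$ by Proposition \ref{prp:r-charecterization}, consistent with the claim that this theorem strengthens the classical guarantee.
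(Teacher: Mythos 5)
Your proposal is correct, and it is careful precisely where it needs to be: Theorem \ref{thm:PZF-deterministic} cannot be cited as a black box at zero missing entries, since its proof divides by $\dot{\bar{\gamma}}$ (the paper explicitly assumes $\dot{\bar{\gamma}}>0$ when passing to the quadratic \eqref{eq:PZF-quadratic-lambda}), and you resolve this degeneracy correctly: at $\dot{\bar{\gamma}}=0$ the sufficient condition \eqref{eq:Proof-PZF-deterministic-semifinal-SS}, combined with the bound $\norm{\dot{\bar{\v}}^*}_2\le\dot{\bar{\eta}}\left(2\lambda-1/\dot{\bar{\zeta}}\right)$ of Lemma \ref{lem:New-vUB}, degenerates to the linear condition $\mu_{\lambda}(2\lambda-1/\zeta)<1$, whose solution set has right endpoint $\tfrac{1}{2}\left(1/\zeta+1/\mu_{\lambda}\right)$; your $\dot{\bar{\gamma}}\to 0^{+}$ limit of $\dot{\bar{\lambda}}_{\max}$ is also computed correctly and agrees with this endpoint.

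That said, the paper's own proof takes a different and more economical route that never goes through Theorem \ref{thm:PZF-deterministic}: it reuses the separation condition \eqref{eq:ProofLassoSSC-SS-2} from the proof of Theorem \ref{thm:SSCuncorruptedOld} (valid because for uncorrupted data the dual optimum lies in $\cS_1$), substitutes the upper bound of Lemma \ref{lem:New-vUB} for $\norm{\v_{1,\lambda}^*}_2$ (with $\eta=1$) to get $(2\lambda-1/\zeta)\mu_{\lambda}<1$, i.e., $\lambda<\tfrac{1}{2}\left(1/\zeta+1/\mu_{\lambda}\right)$, and obtains non-vanishing of solutions from the KKT conditions under $\lambda>1/\zeta$. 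Once your specialization is unwound at $\dot{\bar{\gamma}}=0$, the inequalities you actually use are exactly these (the $\dot{\cS}_1^{\perp}$ term vanishes and Lemma \ref{lem:New-vUB} does the rest), so the two arguments coincide in mathematical content. What your framing buys is a rigorous justification of the paper's informal claim that Theorem \ref{thm:SSCuncorrupted} is ``obtained from Theorem \ref{thm:PZF-deterministic} for zero missing entries,'' including the nontrivial consistency check that $\dot{\bar{\lambda}}_{\max}$ has the correct limit; what the paper's direct proof buys is brevity and complete avoidance of the division-by-zero issue you had to work around.
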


\section{Proofs} \label{section:Proofs}

In this section we give proofs of Theorem \ref{thm:SSCuncorruptedOld}\footnote{As noted earlier, this theorem follows from Theorem $6$ in \cite{Wang:JMLR16} for zero noise. Here we give a direct proof based on the framework developed in \cite{Soltanolkotabi:AS12} for basis pursuit and later expanded for Lasso by \cite{Wang:JMLR16}; the proof is instructive and highlights techniques relevant to other proofs in this paper.} as well as of Theorems \ref{thm:PZF-deterministic}-\ref{thm:SSCuncorrupted}. A key observation upon which the new Theorems \ref{thm:PZF-deterministic}-\ref{thm:SSCuncorrupted} rely, is the following lemma.

\begin{lem} \label{lem:New-vUB}
Let $\bY \in \Re^{D \times L}, \, \y \in \Re^D$, and define
\begin{align}
\zeta := \norm{\bY^\transpose \y}_{\infty}, \, \, \, \text{and} \, \, \, \eta := \norm{ \y }_2. 
\end{align} Let $\v^*$ be the unique solution to the strongly convex problem
\begin{align}
\max_{\v} \, \, \, \langle \y, \v \rangle - \frac{1}{2 \lambda} \norm{\v }_2^2 \, \, \, \text{s.t.} \, \, \, \norm{\bY^\top \v }_{\infty} \le 1. \label{eq:reduced}
\end{align} Suppose that $\lambda > 1/ \zeta$. Then 
\begin{align}
 \eta/\zeta \le \norm{\v^* }_2 \le \eta \left(2 \lambda- 1/\zeta\right). \label{eq:v-bound}
 \end{align} 
\end{lem}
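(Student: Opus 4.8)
The plan is to obtain both bounds in \eqref{eq:v-bound} simultaneously from a single variational inequality, produced by comparing the optimizer $\v^*$ against one well-chosen feasible point of the dual problem \eqref{eq:reduced}. The whole argument hinges on picking the right competitor and then recognizing a perfect-square factorization.

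First I would exhibit an explicit feasible point. The natural candidate is $\v_0 := \y/\zeta$, which sits on the boundary of the feasible set since $\norm{\bY^\top \v_0}_\infty = \norm{\bY^\top \y}_\infty / \zeta = 1$ (this is where the hypothesis $\zeta > 0$, implicit in $\lambda > 1/\zeta$, is used to make $\v_0$ well defined). Evaluating the dual objective at $\v_0$ gives $\langle \y, \v_0\rangle - \frac{1}{2\lambda}\norm{\v_0}_2^2 = \eta^2/\zeta - \eta^2/(2\lambda\zeta^2)$. Since $\v^*$ is the maximizer, its objective value dominates this, yielding
\[
\langle \y, \v^*\rangle - \frac{1}{2\lambda}\norm{\v^*}_2^2 \ge \frac{\eta^2}{\zeta} - \frac{\eta^2}{2\lambda\zeta^2}.
\]

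Second, I would eliminate the inner product via Cauchy--Schwarz, $\langle \y, \v^*\rangle \le \eta \norm{\v^*}_2$, which only strengthens the left-hand side and leaves a relation purely in $x := \norm{\v^*}_2$. Rearranging and multiplying through by $2\lambda > 0$ produces the quadratic inequality $x^2 - 2\lambda\eta x + \eta^2\bigl(2\lambda/\zeta - 1/\zeta^2\bigr) \le 0$. The pleasant and decisive step is that its discriminant equals $4\eta^2\bigl(\lambda^2 - 2\lambda/\zeta + 1/\zeta^2\bigr) = 4\eta^2(\lambda - 1/\zeta)^2$, a perfect square whose root is $2\eta(\lambda - 1/\zeta)$ precisely because $\lambda > 1/\zeta$. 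The two roots are therefore exactly $\eta/\zeta$ and $\eta(2\lambda - 1/\zeta)$, and since the parabola opens upward, the inequality forces $x$ to lie between them, which is \eqref{eq:v-bound}.

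I expect the only genuine obstacle to be spotting the competitor $\v_0 = \y/\zeta$; everything after that — the optimality comparison, Cauchy--Schwarz, and the factorization of the discriminant — is routine algebra. The notable structural feature is that the \emph{lower} and \emph{upper} bounds are not proved separately but emerge together as the two endpoints of the interval cut out by one quadratic, which explains why the bounds take such a clean symmetric form around $\lambda\eta$.
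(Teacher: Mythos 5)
Your proof is correct, and it takes a recognizably different route from the paper's, even though both arguments pivot on the same competitor $\y/\zeta$ and the same optimality comparison. The paper first rewrites the dual as the projection problem $\min_{\v} \norm{\lambda \y - \v}_2^2$ subject to $\norm{\bY^\transpose \v}_\infty \le 1$, orthogonally decomposes $\v^* = \alpha^* \y + \bxi^*$ with $\bxi^* \perp \y$, and drops the term $\norm{\bxi^*}_2^2$ to first pin down the coefficient, $1/\zeta \le \alpha^* \le 2\lambda - 1/\zeta$; the lower bound on $\norm{\v^*}_2$ then follows from $\norm{\v^*}_2 \ge \alpha^* \eta$, and the upper bound from substituting the bound on $\alpha^*$ back into the comparison inequality. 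You instead stay with the dual objective as given, discard the directional information via Cauchy--Schwarz, and obtain a single quadratic inequality in $x = \norm{\v^*}_2$ whose two roots are exactly $\eta/\zeta$ and $\eta(2\lambda - 1/\zeta)$ (your discriminant computation is right, and the hypothesis $\lambda > 1/\zeta$ correctly identifies which root is smaller). The trade-off: the paper's decomposition yields strictly finer information, namely that already the component of $\v^*$ along $\y$ satisfies $\langle \hat{\y}, \v^* \rangle = \alpha^* \eta \ge \eta/\zeta$, which is stronger than a bound on the norm alone; your argument is shorter, needs no reformulation or decomposition, and makes transparent why the two bounds have their symmetric form, as the two roots of one parabola centered at $\lambda \eta$. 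Either proof is a valid substitute for the one in the paper.
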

\begin{proof}
Problem \eqref{eq:reduced} is equivalent to
\begin{align}
\min_{\v} \, \, \,\norm{\lambda \y -\v }_2^2 \, \, \, \text{s.t.} \, \, \, \norm{\bY^\transpose \v }_{\infty} \le 1. 
\label{eq:PZF-dual-distance}
\end{align} Decomposing $\v = \alpha \y + \bxi$, with $\bxi \perp \y$, \eqref{eq:PZF-dual-distance} becomes
\small
\begin{align}
\min_{\v} \, \, \,\norm{\lambda \y -\alpha \y }_2^2 +\norm{ \bxi }_2^2 \, \, \, \text{s.t.} \, \, \,  \norm{ \bY^\transpose(\alpha \y+\bxi )}_{\infty} \le 1.
\end{align} \normalsize Now, $\y/\zeta$ is a feasible point of \eqref{eq:PZF-dual-distance}, hence, decomposing the optimal solution $\v^*$ as $\v^*=\alpha^* \y + \bxi^*$, we have that
\begin{align}
\norm{\lambda \y -\alpha^* \y }_2^2 &\le \norm{\lambda \y -\alpha^* \y }_2^2 + \norm{ \bxi^* }_2^2 \\
&\le \norm{\lambda \y - \y/\zeta }_2^2. \end{align}
Remembering that $\lambda > 1/\zeta$, the above inequality implies 
\begin{align}
\left| \lambda - \alpha^* \right| \le \lambda - 1/\zeta \, \, \, \Leftrightarrow \, \, \,  1/\zeta \le \alpha^* \le 2 \lambda - 1/\zeta. \label{eq:Bound-alpha}
\end{align} Now this gives the lower bound in \eqref{eq:v-bound}, since 
\begin{align}
\norm{\v^* }_2^2  &= \norm{\alpha^* \y }_2^2 +  \norm{\bxi^* }_2^2  \\
& \ge \norm{\alpha^* \y }_2^2  = (\alpha^*)^2 \eta ^2 \stackrel{\eqref{eq:Bound-alpha}}{\ge} \left(\eta/ \zeta \right)^2.
\end{align} To get the upper bound in \eqref{eq:v-bound}, we work again with \begin{align}
& \norm{\lambda \y -\alpha^* \y }_2^2 + \norm{ \bxi^* }_2^2 \le \norm{\lambda \y - \y/\zeta}_2^2, \, \, \, \Leftrightarrow \\
& \eta^2 \lambda^2 -2 \lambda \eta^2 \alpha^* + \left\| \v^* \right\|^2 \le \eta^2 \left(\lambda  - 1/\zeta\right)^2, \Leftrightarrow \\
& \left\| \v^* \right\|^2 \le \eta^2 \left(\lambda  - 1/\zeta\right)^2 -\eta^2 \lambda^2 +2 \lambda \eta^2 \alpha^*. 
\end{align} Using the upper bound \eqref{eq:Bound-alpha} on $\alpha^*$, this becomes
\small
\begin{align}
\left\| \v^* \right\|^2 &\le \eta^2 \left(\lambda  - 1/\zeta\right)^2 -\eta^2 \lambda^2 +2 \lambda \eta^2 \left(2\lambda  - 1/\zeta\right) \\
& = \eta^2 \left(2\lambda  - 1/\zeta\right)^2,
\end{align} \normalsize by which we are done.
\end{proof}

\subsection{Proof of Theorem \ref{thm:SSCuncorruptedOld}} \label{subsection:ProofOld}

According to the general Lasso Lemma $12$ 
in \cite{Wang:JMLR16} (which is a generalization of the general basis pursuit Lemma $7.1$ in \cite{Soltanolkotabi:AS12}) the solution to \eqref{eq:SSC-Lasso} will be subspace preserving if the following subspace separation condition is true,
\begin{align}
\abs{ \langle \x_k^{(i)}, \v^*_{\lambda} \rangle } < 1, \, \, \, \forall i>1, \, \, \, \forall k \in [N_i]. \label{eq:ProofLassoSSC-SS}
\end{align} Now, it can be verified that $\v^*_{\lambda} \in \cS_1$, and so $\v^*_{\lambda} = \bP_{\cS_1}\v^*_{\lambda}=:\v_{1,\lambda}^*$ (otherwise $\bP_{\cS_1}\v^*_{\lambda}$ is a feasible point of the reduced dual problem with larger objective; contradiction). Hence, condition \eqref{eq:ProofLassoSSC-SS} is equivalent to 
\begin{align}
\norm{\v_{1,\lambda}^*}_2\abs{ \langle \x_k^{(i)}, \hat{\v}_{1,\lambda}^* \rangle } < 1, \, \, \, \forall i>1, \, \, \, \forall k \in [N_i]. \label{eq:ProofLassoSSC-SS-2}
\end{align} Now, $\v_{1,\lambda}^*$ is an element of the symmetrized relative polar set associated to $\bX_{-1}^{(1)}$, and so $\norm{\v_{1,\lambda}^*}_2 \le 1/r$ (see Definition \ref{dfn:InradiusCircumradiusPolar} and Lemma \ref{lem:r-R}). Hence condition 
\eqref{eq:ProofLassoSSC-SS-2} is satisfied, if the stronger condition $ \mu_{\lambda} < r$ is true. This guarantees that every solution to \eqref{eq:SSC-Lasso} is subspace preserving. Then the condition $\lambda > 1/ \zeta$ guarantees that any such solution is also non-zero, as under this hypothesis, examination of the KKT conditions reveals that $\c=\0$ is never an optimal solution.

\subsection{Proof of Theorem \ref{thm:PZF-deterministic}} \label{subsection:ProofPZFdeterministic}

Consider the reduced Lasso SSC problem
\begin{align}
\min_{\c} \, \, \, \norm{\c}_1 + \frac{\lambda}{2} \norm{\e}_2^2, \, \, \, \text{s.t.} \, \, \, \e = \bar{\x}_1^{(1)}-  \dot{\bar{\bX}}_{-1}^{(1)}\c, \label{eq:PZF-SSC-reduced}
\end{align} as well as its dual optimization problem
\small
\begin{align}
\max_{\v} \, \, \, \langle \bar{\x}_1^{(1)}, \v \rangle - \frac{1}{2 \lambda} \left\|\v \right\|_2^2 \, \, \, \text{s.t.} \, \, \, \norm{(\dot{\bar{\bX}}_{-1}^{(1)}) ^{\transpose} \v}_{\infty} \le 1. \label{eq:Lasso-dual-noisy}
\end{align} \normalsize Problem \eqref{eq:Lasso-dual-noisy} is strongly convex; let
$\dot{\bar{\v}}^*$ be its unique solution. Then by Lemma $12$ in \cite{Wang:JMLR16}, every solution to the Lasso SSC problem \eqref{eq:Lasso-PZF} will be subspace preserving if
\begin{align}
\abs{ \langle \dot{\bar{\x}}_k^{(i)}, \dot{\bar{\v}}^* \rangle } < 1, \, \, \, \forall i>1, \, \, \, \forall k \in [N_i], \label{eq:Proof-PZF-geometric-SS}
\end{align} Condition \eqref{eq:Proof-PZF-geometric-SS} is satisfied, if the following stronger but more structured condition is satisfied:
\small
\begin{align}
\abs{ \langle \dot{\bar{\x}}_k^{(i)}, \bP_{\dot{\cS}_1}\dot{\bar{\v}}^*) \rangle } + \abs{ \langle \dot{\bar{\x}}_k^{(i)}, \bP_{\dot{\cS}_1^\perp} \dot{\bar{\v}}^* \rangle } < 1, \, \, \, \forall i>1, \, \forall k. \label{eq:Proof-PZF-geometric-SS-structured}
\end{align} \normalsize By thinking of $\v$ as a Lagrange multiplier associated with the equality constraint $\bar{\x}_1^{(1)} = \dot{\bar{\bX}}_{-1}^{(1)} \c + \e$ of \eqref{eq:PZF-SSC-reduced}, we see from the KKT optimality conditions that 
\begin{align}
\dot{\bar{\v}}^* = \lambda \dot{\bar{\e}}^* = \lambda \left(\bar{{\x}}_1^{(1)} - \dot{\bar{\bX}}_{-1}^{(1)} \dot{\bar{\c}}^* \right), \label{eq:Proof-PZF-SSC-deterministic-v-star}
\end{align} where $\dot{\bar{\e}}^*, \dot{\bar{\c}}^*$ are optimal solutions of \eqref{eq:PZF-SSC-reduced}. Taking into consideration that 
\begin{align}
\bar{\x}_1^{(1)} \in \dot{\cS}_1 \, \, \, \text{and} \, \, \,  \dot{\bar{\bX}}_{-1}^{(1)} = \dot{\bX}_{-1}^{(1)} - \dot{\tilde{\bX}}_{-1}^{(1)},
\end{align} projecting \eqref{eq:Proof-PZF-SSC-deterministic-v-star} onto $\dot{\cS}_1^\perp$ gives 
\begin{align}
\bP_{\dot{\cS}_1^\perp} \dot{\bar{\v}}^* &= - \lambda \bP_{\dot{\cS}_1^\perp} \dot{\bar{\bX}}_{-1}^{(1)} \dot{\bar{\c}}^*\\
&= \lambda \bP_{\dot{\cS}_1^\perp} \dot{\tilde{\bX}}_{-1}^{(1)} \dot{\bar{\c}}^* \\
&= \lambda \sum_{j=2}^{N_1} \dot{\bar{\c}}^*_j \, \bP_{\dot{\cS}_1^\perp} \dot{\tilde{\x}}_{j}^{(1)}. 
\end{align} Hence,
\begin{align}
&\abs{\langle \dot{\bar{\x}}_k^{(i)}, \bP_{\dot{\cS}_1}^\perp \dot{\bar{\v}}^* \rangle} = 
\lambda
\abs[\Big]{ \sum_{j=2}^{N_1} \dot{\bar{\c}}_j^* \langle \dot{\bar{\x}}_k^{(i)},  \bP_{\dot{\cS}_1}^\perp \dot{\tilde{\x}}_j^{(1)} \rangle}\\
& \le \lambda \Bigg( \max_{\substack{i>1,k \in [N_i],\\ j \in [N_1]}} \abs{\langle \dot{\bar{\x}}_k^{(i)},  \bP_{\dot{\cS}_1}^\perp \dot{\tilde{\x}}_j^{(1)} \rangle} \Bigg) \norm{\dot{\bar{\c}}^*}_1 \\
&= \lambda \dot{\bar{\gamma}} \norm{\dot{\bar{\c}}^*}_1.
\label{eq:PZF-v-UB-c} 
\end{align}  Let us derive an upper bound on $\norm{\dot{\bar{\c}}^*}_1$. 
By strong duality 
\begin{align}
\norm{ \dot{\bar{\c}}}_1^* &+ \frac{\lambda}{2} \norm{\dot{\bar{\e}}^*}_2^2 = \langle \bar{\x}_1^{(1)}, \dot{\bar{\v}}^* \rangle - \frac{1}{2 \lambda} \norm{\dot{\bar{\v}}^*}_2^2 \, \Rightarrow \\
\norm{ \dot{\bar{\c}}^* }_1 &= \langle  \bar{\x}_1^{(1)}, \dot{\bar{\v}}^* \rangle - \frac{1}{2\lambda} \norm{ \dot{\bar{\v}}^*}_2^2 - \frac{\lambda}{2} \norm{\dot{\bar{\e}}^*}_2^2 \\
&  \stackrel{\dot{\bar{\v}}^* = \lambda \dot{\bar{\e}}^*}{=} \langle  \bar{\x}_1^{(1)}, \dot{\bar{\v}}^* \rangle - \frac{1}{\lambda} \norm{ \dot{\bar{\v}}^*}_2^2 \\
& =   \langle  \bP_{\dot{\cS}_1} \bar{\x}_1^{(1)},  \dot{\bar{\v}}^* \rangle - \frac{1}{\lambda} \norm{ \dot{\bar{\v}}^*}_2^2 \\
& =   \langle  \bar{\x}_1^{(1)},  \bP_{\dot{\cS}_1} \dot{\bar{\v}}^* \rangle - \frac{1}{\lambda} \norm{ \dot{\bar{\v}}^*}_2^2 \\
& =   \langle  \bar{\x}_1^{(1)},  \dot{\bar{\v}}^*_1 \rangle - \frac{1}{\lambda} \norm{ \dot{\bar{\v}}^*}_2^2 \\
& \le \norm{ \bar{\x}_1^{(1)} }_2 \norm{ \dot{\bar{\v}}^*_1}_2 - \frac{1}{\lambda} \norm{ \dot{\bar{\v}}^*}_2^2 \\
&\stackrel{Lem. \ref{lem:New-vUB}}{\le} \dot{\bar{\eta}} \norm{  \dot{\bar{\v}}^*_1 }_2 - \dot{\bar{\eta}}^2\lambda^{-1} \dot{\bar{\zeta}}^{-2}. \label{eq:PZF-c-UB}
\end{align} Substituting \eqref{eq:PZF-c-UB} into \eqref{eq:PZF-v-UB-c}, and the result into \eqref{eq:Proof-PZF-geometric-SS-structured}, we obtain the sufficient condition
\begin{align}
 \, \dot{\bar{\mu}}_{\lambda} \, \norm{  \dot{\bar{\v}}^*_1 }_2
+ \lambda \, \dot{\bar{\gamma}} \, \big(\dot{\bar{\eta}}\, \norm{  \dot{\bar{\v}}^*}_1-\dot{\bar{\eta}}^2 / (\lambda \, \dot{\bar{\zeta}}^2)\big) < 1. \label{eq:Proof-PZF-deterministic-semifinal-SS}
\end{align}  By Lemma \ref{lem:New-vUB} we have
\begin{align}
\norm{  \dot{\bar{\v}}^*_1 }_2 \le \norm{  \dot{\bar{\v}}^* }_2 \le \dot{\bar{\eta}} (2 \lambda - 1/\dot{\bar{\zeta}}).
\end{align} Substituting this upper bound for $\norm{  \dot{\bar{\v}}^*_1 }_2$ into \eqref{eq:Proof-PZF-deterministic-semifinal-SS} (and assuming that $\dot{\bar{\gamma}}>0$) we obtain
\small
\begin{align}
\lambda^2 +\Big(\frac{ \dot{\bar{\mu}}_{\lambda}}{\dot{\bar{\eta}}\dot{\bar{\gamma}}} - \frac{1}{2\dot{\bar{\zeta}}} \Big) \lambda -\Big(\frac{1}{2 \dot{\bar{\eta}}^2 \dot{\bar{\gamma}}} + \frac{1}{2\dot{\bar{\zeta}}^2} + \frac{ \dot{\bar{\mu}}_{\lambda}}{2\dot{\bar{\eta}}\dot{\bar{\gamma}}\bar{\zeta}} \Big) <0. \label{eq:PZF-quadratic-lambda}
\end{align} \normalsize For the moment let us treat $\dot{\bar{\mu}}_{\lambda}$ as constant. Then regardless of the sign of the coefficient $\frac{ \dot{\bar{\mu}}_{\lambda}}{\dot{\bar{\eta}}\dot{\bar{\gamma}}} - \frac{1}{2\dot{\bar{\zeta}}}$, by Descartes's rule of signs the quadratic polynomial appearing in the LHS of \eqref{eq:PZF-quadratic-lambda} has exactly one negative and one positive root; the positive root is given precisely by $\dot{\bar{\lambda}}$ in \eqref{eq:lambda-u}. Thus constraining $\lambda$ to lie in the interval $(0,\dot{\bar{\lambda}})$ guarantees that \eqref{eq:PZF-quadratic-lambda} is true, which in turn guarantees that \eqref{eq:Proof-PZF-geometric-SS} is true, which finally guarantees that $\dot{\bar{\c}}^*$ is a subspace preserving solution. We also need to ensure that $\dot{\bar{\c}}^*$ is not the zero solution. Inspection of the KKT conditions for the reduced problem \eqref{eq:PZF-SSC-reduced} reveals that if $\lambda> 1/\dot{\bar{\zeta}}$, then any optimal solution of \eqref{eq:PZF-SSC-reduced} must necessarily be non-zero. In turn, this implies that any optimal solution of \eqref{eq:Lasso-PZF} must be non-zero. As a consequence, if $\lambda$ satisfies \begin{align}
1/\dot{\bar{\zeta}} < \lambda < \dot{\bar{\lambda}}, \label{eq:lambda-inequalities}
\end{align} then any optimal solution of \eqref{eq:Lasso-PZF} will be both non-zero and subspace preserving. For \eqref{eq:lambda-inequalities} to be true we must have that  
$1/\dot{\bar{\zeta}} < \dot{\bar{\lambda}}$. Substituting the expression for $\dot{\bar{\lambda}}$ into this last inequality, a straightforward algebraic manipulation reveals that $1/\dot{\bar{\zeta}} < \dot{\bar{\lambda}}$ is true as long as \eqref{eq:PZF-deterministic-gap} holds true.

\subsection{Proof of Theorem \ref{thm:PZF-probabilistic}} \label{subsection:ProofPZFprobabilistic}

First note that, under the fully random model, $\x_j^{(i)}$ is uniformly distributed\footnote{See \cite{Soltanolkotabi:AS12} for a detailed explanation.} on the unit sphere of $\Re^D$, for every $i \in [n], \, j \in [N_i]$. Define $\omega := m/D$. We bound the terms appearing in \eqref{eq:PZF-deterministic-gap} with high probability. In particular, since $\bar{\x}_1^{(1)}$ is the orthogonal projection of $\x_1^{(1)}$ onto the $(D-m)$-dimensional linear subspace $\E_1^{(1)}$, Lemma \ref{lem:RandomProjection} gives that
\begin{align}
&\mathbb{P}\Big[\dot{\bar{\eta}} \le
 \sqrt{1-\omega} + \sqrt{\epsilon+\beta/3} \Big] \ge \nonumber \\ 
 & 1 - 
\frac{1}{N}\exp\Big(-\frac{D\epsilon}{2} \Big). 
\end{align}  Moreover, for every $k,i$ we can write 
\begin{align}
\langle \dot{\bar{\x}}_k^{(i)}, \hat{\dot{\bar{\v}}}_1^* \rangle  &= 
\langle \bar{\bP}_1^{(1)} \bar{\bP}_k^{(i)} \x_k^{(i)}, \hat{\dot{\bar{\v}}}_1^* \rangle \\
&= \langle \x_k^{(i)}, \bar{\bP}_1^{(1)} \bar{\bP}_k^{(i)} \hat{\dot{\bar{\v}}}_1^* \rangle \\
& \le \langle \x_k^{(i)}, \widehat{\bar{\bP}_1^{(1)} \bar{\bP}_k^{(i)} \hat{\dot{\bar{\v}}}_1^*} \rangle.
\end{align} Since for $i>1$ we have that $\x_k^{(i)}$ is independent from $\widehat{\bar{\bP}_1^{(1)} \bar{\bP}_k^{(i)} \hat{\dot{\bar{\v}}}_1^*}$, Lemma \ref{lem:inner} together with the union bound give
\begin{align}
\mathbb{P}\Big[\dot{\bar{\mu}}_{\lambda} \le \beta \Big] \ge 1 - 
2/N^2. 
\end{align} Next, we bound from below $\dot{\bar{\zeta}}$ as
\begin{align}
\dot{\bar{\zeta}} \ge r - \max_{j>1}\norm{(\boldsymbol{I}-\bar{\bP}_{j}^{(1)}  \bar{\bP}_1^{(1)}) \x_{1}^{(1)}}_2. \label{eq:Proof-PZF-Probabilistic-zetaLB}
\end{align} To prove this, we use the technique of \cite{Wang:JMLR16}, while exploiting the fact that the noise is now given by projections. Specifically, let $j^*, j^\dagger$ be such that 
\begin{align}
\norm{(\dot{\bar{\bX}}_{-1}^{(1)})^\top \bar{\x}_{1}^{(1)}}_{\infty} &= 
\abs{\langle \dot{\bar{\x}}_{j^*}^{(1)}, \bar{\x}_{1}^{(1)} \rangle}, \\
\norm{(\bX_{-1}^{(1)})^\top \x_{1}^{(1)}}_{\infty} &= 
\abs{\langle \x_{j^\dagger}^{(1)}, \x_{1}^{(1)} \rangle}.
\end{align} Then 
\small
\begin{align}
\dot{\bar{\zeta}}&=\norm{(\dot{\bar{\bX}}_{-1}^{(1)})^\top \bar{\x}_{1}^{(1)}}_{\infty} \\
&= \abs{\langle \dot{\bar{\x}}_{j^*}^{(1)}, \bar{\x}_{1}^{(1)} \rangle} \\
&\ge  \abs{\langle \dot{\bar{\x}}_{j^\dagger}^{(1)}, \bar{\x}_{1}^{(1)} \rangle} \\
&= \abs{\langle \bar{\bP}_1^{(1)} \bar{\bP}_{j^\dagger}^{(1)} \x_{j^\dagger}^{(1)}, \bar{\bP}_1^{(1)} \x_{1}^{(1)} \rangle} \\
&= \abs{\langle \x_{j^\dagger}^{(1)},  \bar{\bP}_{j^\dagger}^{(1)}  \bar{\bP}_1^{(1)} \x_{1}^{(1)} \rangle} \\
&= \abs{\langle \x_{j^\dagger}^{(1)},   \x_{1}^{(1)} - (\boldsymbol{I}-\bar{\bP}_{j^\dagger}^{(1)}  \bar{\bP}_1^{(1)}) \x_{1}^{(1)}) \rangle} \\
& \ge \abs{\langle \x_{j^\dagger}^{(1)},   \x_{1}^{(1)} \rangle} - 
\abs{\langle \x_{j^\dagger}^{(1)},   (\boldsymbol{I}-\bar{\bP}_{j^\dagger}^{(1)}  \bar{\bP}_1^{(1)}) \x_{1}^{(1)} \rangle} \\
& \ge \abs{\langle \x_{j^\dagger}^{(1)},   \x_{1}^{(1)} \rangle} - 
\norm{(\boldsymbol{I}-\bar{\bP}_{j^\dagger}^{(1)}  \bar{\bP}_1^{(1)}) \x_{1}^{(1)}}_2 \\
& \stackrel{\text{Prp.} \ref{prp:r-charecterization}}{\ge} r - 
\norm{(\boldsymbol{I}-\bar{\bP}_{j^\dagger}^{(1)}  \bar{\bP}_1^{(1)}) \x_{1}^{(1)}}_2 \\
& \ge r - 
\max_{j>1} \norm{(\boldsymbol{I}-\bar{\bP}_j^{(1)}  \bar{\bP}_1^{(1)}) \x_{1}^{(1)}}_2.
\end{align} \normalsize Now, by Lemma \ref{lem:r}, 
\begin{align}
\mathbb{P}[r \ge\alpha] \ge 1- \exp{(-\sqrt{\rho} d)}. 
\end{align} Moreover, for every $j$ the vector $(\boldsymbol{I}-\bar{\bP}_{j}^{(1)}  \bar{\bP}_1^{(1)}) \x_{1}^{(1)}$ is the orthogonal projection of $\x_{1}^{(1)}$ onto a linear subspace of dimension at most $2m$. Hence, Lemma \ref{lem:RandomProjection} and the union bound give that 
\begin{align}
&\mathbb{P}\Big[\max_{j>1} \norm{(\boldsymbol{I}-\bar{\bP}_{j}^{(1)}  \bar{\bP}_1^{(1)}) \x_{1}^{(1)}}_2 \le \sqrt{2\omega} + \sqrt{\epsilon+\beta/3} \Big] \ge \nonumber \\
&\ge 1 - \frac{N_1-1}{N}\exp\Big(-\frac{D \epsilon}{2} \Big). 
\end{align} Putting everything together via the union bound, we have that
\eqref{eq:PZF-deterministic-gap} holds true with probability at least
\begin{align}
1-2/N^2-\exp(-\sqrt{\rho} d) - \exp\Big(-\frac{D \epsilon}{2} \Big),
\end{align}  as long as \eqref{eq:PZF-probabilistic-gap} is true.

\subsection{Proof of Theorem \ref{thm:ZF-deterministic}} \label{subsection:ProofZFdeterministic}

The proof follows exactly the same steps as the proof of Theorem \ref{thm:PZF-deterministic} with one crucial difference: one now works with the
original subspaces $\cS_i$, instead of the projected subspaces $\dot{\cS}_i$. In that case, the point $\bar{\x}_1^{(1)}$ being expressed has in principle non-zero component in $\cS_1^\perp$; contrast this to the fact that $\bar{\x}_1^{(1)}$ has zero component in $\dot{\cS}_i^\perp$. We give some details.
Consider the reduced Lasso SSC problem
\begin{align}
\min_{\c} \, \, \, \norm{\c}_1 + \frac{\lambda}{2} \norm{\e}_2^2, \, \, \, \text{s.t.} \, \, \, \e = \bar{\x}_1^{(1)}-  \bar{\bX}_{-1}^{(1)}\c, \label{eq:ZF-SSC-reduced}
\end{align} as well as its dual optimization problem
\small
\begin{align}
\max_{\v} \, \, \, \langle \bar{\x}_1^{(1)}, \v \rangle - \frac{1}{2 \lambda} \left\|\v \right\|_2^2 \, \, \, \text{s.t.} \, \, \, \norm{(\bar{\bX}_{-1}^{(1)}) ^{\transpose} \v}_{\infty} \le 1. \label{eq:Proof-ZF-deterministic-Lasso-dual-reduced}
\end{align} \normalsize Letting $\bar{\v}^*$ be the unique solution of \eqref{eq:Proof-ZF-deterministic-Lasso-dual-reduced}, by Lemma $12$ in \cite{Wang:JMLR16} we have that every solution to the Lasso SSC problem \eqref{eq:Lasso-ZF} will be subspace preserving if
\begin{align}
\abs{ \langle \bar{\x}_k^{(i)},\bar{\v}^* \rangle } < 1, \, \, \, \forall i>1, \, \, \, \forall k \in [N_i], \label{eq:Proof-ZF-geometric-SS}
\end{align} or if the stronger condition
\small
\begin{align}
\abs{ \langle \bar{\x}_k^{(i)}, \bP_{\cS_1}\bar{\v}^*) \rangle } + \abs{ \langle \bar{\x}_k^{(i)}, \bP_{\cS_1^\perp} \bar{\v}^* \rangle } < 1,  \forall i>1,  \forall k,\label{eq:Proof-ZF-geometric-SS-structured}
\end{align} \normalsize is satisfied. Letting $\bar{\e}^*, \bar{\c}^*$ be optimal solutions of \eqref{eq:ZF-SSC-reduced}, we have that
\begin{align}
\bar{\v}^* = \lambda \bar{\e}^* = \lambda \left(\bar{{\x}}_1^{(1)} - \bar{\bX}_{-1}^{(1)} \bar{\c}^* \right), \label{eq:Proof-ZF-SSC-deterministic-v-star}
\end{align} and so
\begin{align}
&\abs{\langle \bar{\x}_k^{(i)}, \bP_{\cS_1^\perp} \bar{\v}^* \rangle} = 
\lambda \Big| \langle \bar{\x}_k^{(i)},  \bP_{\cS_1^\perp} \tilde{\x}_1^{(1)} \rangle + \nonumber \\
& \sum_{j=2}^{N_1} \bar{\c}_j^* \langle \bar{\x}_k^{(i)},  \bP_{\cS_1^\perp} \tilde{\x}_j^{(1)} \rangle \Big| \\
&\le \lambda \bar{\gamma} (1+ \norm{\bar{\c}^*}_1).
\label{eq:ZF-v-UB-c} 
\end{align} In a similar way as in the proof of Theorem \ref{thm:PZF-deterministic}, we have
\begin{align}
\norm{ \bar{\c}^* }_1 \le \bar{\eta} \norm{ \bar{\v}^*_1 }_2 - \bar{\eta}^2\lambda^{-1} \bar{\zeta}^{-2}. \label{eq:ZF-c-UB}
\end{align} Substituting \eqref{eq:ZF-c-UB} into \eqref{eq:ZF-v-UB-c}, and the result into \eqref{eq:Proof-ZF-geometric-SS-structured}, and using Lemma \ref{lem:New-vUB}, we obtain the sufficient condition \footnote{Observe that the only structural difference between \eqref{eq:ZF-quadratic-lambda} and \eqref{eq:PZF-quadratic-lambda} is the extra term $\frac{1}{2\bar{\eta}^2}$ appearing in \eqref{eq:ZF-quadratic-lambda}.}
\small
\begin{align}
\lambda^2 +\Big(\frac{\bar{\mu}_{\lambda}}{\bar{\eta}\bar{\gamma}} +\frac{1}{2\bar{\eta}^2}- \frac{1}{2\bar{\zeta}} \Big) \lambda -\Big(\frac{1}{2 \bar{\eta}^2 \bar{\gamma}} + \frac{1}{2\bar{\zeta}^2} + \frac{\bar{\mu}_{\lambda}}{2\dot{\eta}\bar{\gamma}\bar{\zeta}} \Big) <0. \label{eq:ZF-quadratic-lambda}
\end{align} \normalsize The LHS is a quadratic polynomial in $\lambda$ (treating $\bar{\mu}_{\lambda}$ as a constant) with one negative and one positive root, the latter being precisely $\bar{\lambda}$. If  
\begin{align}
1/\bar{\zeta} < \lambda < \bar{\lambda}, \label{eq:ZF-lambda-inequalities}
\end{align} then any optimal solution of \eqref{eq:Lasso-ZF} will be both non-zero and subspace preserving. The condition \eqref{eq:ZF-deterministic-gap} ensures that 
\begin{align}
1/\bar{\zeta} < \bar{\lambda}. \label{eq:lambda-existence}
\end{align}

\subsection{Proof of Theorem \ref{thm:ZF-probabilistic}} \label{subsection:ProofZFprobabilistic}

We bound with high probability the terms in \eqref{eq:ZF-deterministic-gap}. The terms $\bar{\zeta}, \bar{\mu}_{\lambda}, \bar{\eta}$ are bounded exactly as in the proof of Theorem \ref{thm:PZF-probabilistic}. For the term $\bar{\gamma}$, note that 
\begin{align}
\abs{\langle \bar{\x}_k^{(i)}, \bP_{\cS_1^\perp} \tilde{\x}_j^{(1)} \rangle} &\le \norm{\bar{\x}_k^{(i)}}_2 \norm{\bP_{\cS_1^\perp} \tilde{\x}_j^{(1)}}_2 \\
& \le \norm{\bar{\x}_k^{(i)}}_2 \norm{\tilde{\x}_j^{(1)}}_2.
\end{align} $\bar{\x}_k^{(i)}$ is the orthogonal projection of $\x_k^{(i)}$ onto the linear subspace $\E_k^{(i)}$ of dimension $D-m$, while $\tilde{\x}_j^{(1)}$ is the orthogonal projection of $\x_j^{(1)}$ onto the linear subspace $\tilde{\E}_j^{(1)}$ of dimension $m$. Hence, Lemma \ref{lem:RandomProjection} and the union bound give
\begin{align}
&\mathbb{P}\Big[ \bar{\gamma} \le (\sqrt{1-\omega}+\sqrt{\epsilon+\beta/3})(\sqrt{\omega}+\sqrt{\epsilon+\beta/3})\Big] \ge \nonumber \\
& 1- \exp\Big(-\frac{D\epsilon}{2} \Big).
\end{align} Replacing each term with its corresponding upper or lower bound in 
\eqref{eq:ZF-deterministic-gap} gives precisely \eqref{eq:ZF-probabilistic-gap}.

\subsection{Proof of Theorem \ref{thm:SSCgenericNoise}} \label{subsection:ProofGeneric}

We only note that the entire derivation of Theorem \ref{thm:ZF-deterministic} can be repeated to arrive at identical formulas, except that now all quantities are to be computed using 
$\bar{\x}_k^{(i)} = \x_k^{(i)} + \boldsymbol{\delta}_k^{(i)}$ and $\tilde{\x}_k^{(i)} =  -\boldsymbol{\delta}_k^{(i)}$. Specifically, we now have the following bounds:
\begin{align}
\bar{\zeta} &\ge r -2 \delta -\delta^2, \\
\bar{\mu}_{\lambda} & \le \mu_{\lambda}' + \delta, \\
\bar{\eta} & \le \sqrt{1+\delta^2}, \\
\bar{\gamma} & \le \delta + \delta^2.
\end{align} Hence, condition \eqref{eq:ZF-deterministic-gap} is satisfied if
\begin{align}
r -2 \delta -\delta^2 > \sqrt{1+\delta^2} (\mu_{\lambda}' + \delta) + \delta + \delta^2. \label{eq:ProofGenericNoiseSSdelta}
\end{align} Assuming that 
\begin{align}
r- 3 \delta -2\delta^2>0 \Leftrightarrow \delta < \frac{\sqrt{9+8r}-3}{4}, \label{eq:ProofGenericNoiseAssumption}
\end{align} \eqref{eq:ProofGenericNoiseSSdelta} is equivalent to
\begin{align}
(r> 3 \delta -2\delta^2)^2 > (1+\delta^2) (\mu_{\lambda}' + \delta)^2,
\end{align} which in turn is equivalent to 
\begin{align}
&-3\delta^4+2(\mu_{\lambda}'-6)\delta^3+((\mu_{\lambda}')^2-8+4r)\delta^2+ \nonumber \\
&2(\mu_{\lambda}'+3r)\delta + (\mu_{\lambda}')^2-r^2<0.
\end{align} This latter condition is true if the following stronger condition is true:
\begin{align}
-3\delta^2+ 2(\mu_{\lambda}'+3r)\delta + (\mu_{\lambda}')^2-r^2<0. \label{eq:ProofGenericNoisedelta}
\end{align} The LHS is a quadratic polynomial in $\delta$ with two positive roots. Condition \eqref{eq:NewNoisySSC-deterministic} forces $\delta$ to be smaller than the smallest positive root. Finally, \eqref{eq:ProofGenericNoisedelta} can be further relaxed to the even stronger (but linear) condition
\begin{align}
2(\mu_{\lambda}'+3r)\delta + (\mu_{\lambda}')^2-r^2<0,
\end{align} which is implied by\footnote{Note also that \eqref{eq:ProofGenericNoiseAssumption} is implied by \eqref{eq:NewNoisySSC-deterministic}.} \eqref{eq:NewNoisySSC-deterministic2}.

\subsection{Proof of Theorem \ref{thm:SSCuncorrupted}} \label{subsection:ProofSSCuncorrupted}

Replace $\norm{\v_{1,\lambda}^*}_2$ in \eqref{eq:ProofLassoSSC-SS-2} by the upper bound \eqref{eq:v-bound} of Lemma \ref{lem:New-vUB} to obtain 
\begin{align}
(2\lambda -1/\zeta)\abs{ \langle \x_k^{(i)}, \hat{\v}_{1,\lambda}^* \rangle } &< 1, \, \, \, \forall i>1, \forall k \, \, \, \Leftrightarrow\\
(2\lambda -1/\zeta) \mu_{\lambda} &<1. \label{eq:Proof-NewSSC-ss}
\end{align} This is a sufficient condition for every optimal solution of the Lasso SSC problem \eqref{eq:SSC-Lasso} to be subspace preserving. Solving with respect to $\lambda$, \eqref{eq:Proof-NewSSC-ss} gives
\begin{align}
\lambda < \frac{1}{2} \Bigg(\frac{1}{\zeta} + \frac{1}{\mu_{\lambda}} \Bigg) \label{eq:NewSSC-lambda-UB}.
\end{align} On the other hand, requiring that $\lambda > 1/ \zeta$ guarantees that $\c=\0$ is not optimal for \eqref{eq:SSC-Lasso}, exactly as in the proof of Theorem
\ref{thm:SSCuncorruptedOld}. Then condition $\mu_{\lambda}<\zeta$ ensures that 
the interval $\Lambda_{\lambda}$ is non-empty.

\section{Conclusions}
We developed theoretical bounds on the tolerable percentage of missing entries for Sparse Subspace Clustering (SSC). Our analysis confirmed theoretically that a projection onto the observed pattern of the point being expressed leads to a higher tolerable level of missing entries than without this projection, which is a finding of general interest for self-expressive clustering and robust PCA methods.

\appendix
\section{Appendix}

In this appendix we collect some technical elements from convex geometry and high-dimensional probability theory, that are used extensively in the technical development of the paper. Proposition \ref{prp:r-charecterization} is known to experts, yet being unaware of a direct proof in the literature, we provide one.

\begin{dfn} \label{dfn:InradiusCircumradiusPolar}[Inradius, Circumradius, Polar Set]
Let $\bY \in \Re^{D \times L}$ be a set of $L$ points of $\Re^{D}$. We denote by $r(\bY)$ the radius of the largest Euclidean ball of $\Span(\bY)$ that is contained in the convex hull of the points $\bY$; we refer to $r(\bY)$ as the relelative inradius. Dually, we denote by $\mathcal{R}(\bY)$ the radius of the smallest Euclidean ball of $\Span(\bY)$ that contains $\bY$; we refer to $\mathcal{R}(\bY)$ as the relative circumradius. The polar set $\bY^\circ$ of $\bY$ is the convex subset of $\Re^D$
\begin{align}
\bY^\circ:= \Big\{\v \in \Re^D: \, \, \, \y^\transpose \v \le 1, \, \forall \y \in \bY \Big\}.
\end{align}
\end{dfn}

\begin{lem}[\S $1.2$ in \cite{Gritzmann:92}] \label{lem:r-R}
Let $\bY \in \Re^{D \times L}$ be a set of $L$ points of $\Re^{D}$. Then 
\begin{align}
\mathcal{R}(( \bY)^\circ \cap \Span(\bY)) = 1 / r( \bY).
\end{align} 
\end{lem}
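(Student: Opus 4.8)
The plan is to work entirely inside the subspace $V := \Span(\bY)$, equipped with the inherited Euclidean structure, and to reduce the statement to the standard reciprocal duality between a symmetric convex body and its polar. Write $C := \mathrm{conv}(\bY) \subseteq V$ and $P := \bY^\circ \cap V$. First I would identify $P$ with the polar of $C$ taken inside $V$: since any linear functional attains its maximum over $\mathrm{conv}(\bY)$ at one of the generating points, the system $\langle \v, \y \rangle \le 1$ for all $\y \in \bY$ is equivalent to $\langle \v, \c \rangle \le 1$ for all $\c \in C$, whence $P = C^\circ$. Because the points $\bY$ span $V$ and (in the symmetric situation in which the lemma is applied) $\0 \in \mathrm{int}_V(C)$, the set $C$ is a genuine convex body in $V$ containing the origin in its interior, so $C^\circ$ is bounded and the bipolar identity $(C^\circ)^\circ = C$ is available.

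The core of the argument is two elementary polarity facts, applied within $V$. First, polarity reverses inclusions: $A \subseteq B \Rightarrow B^\circ \subseteq A^\circ$. Second, the polar of the Euclidean ball $B_V(\0,\rho) := \{\x \in V : \norm{\x}_2 \le \rho\}$ is $B_V(\0, 1/\rho)$, which follows by a one-line Cauchy--Schwarz computation ($\langle \v, \x\rangle \le 1$ for all $\norm{\x}_2 \le \rho$ iff $\rho \norm{\v}_2 \le 1$). Combining these with the bipolar identity yields the chain of equivalences
\[
B_V(\0,\rho) \subseteq C \iff C^\circ \subseteq B_V(\0, 1/\rho),
\]
so the largest $\rho$ realizing the left inclusion is the reciprocal of the smallest radius realizing the right inclusion. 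In other words, the origin-centered inradius of $C$ and the origin-centered circumradius of $P = C^\circ$ satisfy $\mathcal{R}_{\0}(P) = 1/r_{\0}(C)$.

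The final step is to remove the qualifier ``origin-centered'', and here I would invoke the central symmetry of $\bY$, hence of $C$ and (immediately from the definition of the polar) of $P$. For a body symmetric about $\0$, the largest inscribed ball may be taken centered at $\0$: averaging a ball $B_V(\x,\rho) \subseteq C$ with its reflection $B_V(-\x,\rho) \subseteq C$ gives, by convexity, $B_V(\0,\rho) \subseteq C$, so the free-center inradius equals $r_{\0}(C)$. Dually, the unique smallest enclosing ball of the symmetric compact body $P$ is invariant under the reflection and therefore centered at $\0$, so its free-center circumradius equals $\mathcal{R}_{\0}(P)$. Chaining these identities gives $\mathcal{R}(P) = \mathcal{R}_{\0}(P) = 1/r_{\0}(C) = 1/r(\bY)$, as claimed.

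The hard part will be exactly this centering issue: the clean reciprocal duality is intrinsically an origin-centered statement, whereas Definition \ref{dfn:InradiusCircumradiusPolar} places no constraint on where the optimal balls sit. The symmetry of $\bY$ — which is how the lemma is actually invoked, through the \emph{symmetrized} relative polar set in the proof of Theorem \ref{thm:SSCuncorruptedOld} — is precisely what reconciles the two; without it one obtains only the inequality $\mathcal{R}(P) \le 1/r(\bY)$. A secondary technicality worth verifying is that $\0$ lies in the relative interior of $C$, which is what makes $P$ bounded and licenses the bipolar identity.
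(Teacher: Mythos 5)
The paper never proves this lemma --- it is imported verbatim from \S 1.2 of \cite{Gritzmann:92} --- so your argument is not paralleling a proof in the paper; it is supplying one. On its own terms it is correct, and it takes the natural route: identify $P := \bY^\circ \cap \Span(\bY)$ with the polar of $C := \mathrm{conv}(\bY)$ computed inside $V := \Span(\bY)$, prove the origin-centered equivalence $B_V(\0,\rho) \subseteq C \Leftrightarrow C^\circ \subseteq B_V(\0,1/\rho)$ from inclusion reversal, the Cauchy--Schwarz computation of the polar of a ball, and the bipolar identity, and then use central symmetry to identify the free-center radii with the origin-centered ones. Your diagnosis of the centering issue is also exactly right and worth making explicit: with the free-center reading of $r$ and $\mathcal{R}$ in Definition \ref{dfn:InradiusCircumradiusPolar} and no symmetry hypothesis on $\bY$, the stated identity is false, and what rescues it is that the paper only ever applies the lemma to symmetrized data (the inradius $r$ is defined via the symmetrized hull, the proof of Theorem \ref{thm:SSCuncorruptedOld} invokes the ``symmetrized relative polar set'', and the proof of Proposition \ref{prp:r-charecterization} computes $\mathcal{R}$ as $\max_{\v \in P} \norm{\v}_2$, i.e., origin-centered, while writing $(\pm\bY)^\circ$). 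One streamlining: for the circumball you do not need uniqueness of the smallest enclosing ball; if $P \subseteq B_V(\z,R)$ then $P = -P \subseteq B_V(-\z,R)$, and any $p$ in both balls satisfies $\norm{p}_2 = \norm{\tfrac12(p-\z)+\tfrac12(p+\z)}_2 \le R$, so $P \subseteq B_V(\0,R)$ directly.

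There is one genuine error, though it is confined to a remark your proof never uses: the claim that without symmetry ``one obtains only the inequality $\mathcal{R}(P) \le 1/r(\bY)$'' is backwards. Take $\bY = \{-\epsilon, 1\} \subset \Re^1$ with $0 < \epsilon < 1$: then $r(\bY) = (1+\epsilon)/2$, while $P = [-1/\epsilon, 1]$ has $\mathcal{R}(P) = (1+1/\epsilon)/2 > 2/(1+\epsilon) = 1/r(\bY)$. In fact the inequality valid for general $\bY$ (with $\0$ in the relative interior of the hull, so that $P$ is bounded) runs the other way, $\mathcal{R}(P) \ge 1/r(\bY)$: if $P \subseteq B_V(\z,R)$, then taking polars inside $V$ gives $C \supseteq B_V(\z,R)^\circ$, and a short computation shows the latter is an ellipsoid of revolution whose smallest semi-axis is $1/\sqrt{R^2-\norm{\z}_2^2} \ge 1/R$, whence $r(\bY) \ge 1/R$. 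Symmetry is needed precisely to obtain the direction your main argument delivers, $\mathcal{R}(P) \le 1/r(\bY)$; so the slip does not contaminate the proof, but the remark should be corrected or deleted.
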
 

\begin{prp} \label{prp:r-charecterization}
For $\bY \in \Re^{D \times L}$ a set of $L$ points of $\Re^{D}$,
\begin{align}
r(\bY) = \min_{\v: \norm{\v}_2=1} \norm{\bY^\transpose \v}_{\infty}.
\end{align}
\end{prp}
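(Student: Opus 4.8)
The plan is to derive the identity from the circumradius–inradius duality already recorded in Lemma \ref{lem:r-R}, rather than reasoning directly about balls inscribed in a convex hull. The entry point is the observation that, writing $V := \Span(\bY)$ and letting $\y_1,\dots,\y_L$ be the columns of $\bY$, the quantity being minimized is precisely the support function of the symmetrized convex hull $\mathcal{Q} := \mathrm{conv}\{\pm\y_1,\dots,\pm\y_L\}$, since
\[
\norm{\bY^\transpose\v}_{\infty} = \max_{\ell}\abs{\langle \y_\ell,\v\rangle} = \max_{\x\in\mathcal{Q}}\langle \x,\v\rangle.
\]
Consequently the polar body of Definition \ref{dfn:InradiusCircumradiusPolar}, intersected with the span, is the two-sided symmetric slab $\bY^\circ\cap V = \{\v\in V : \norm{\bY^\transpose\v}_{\infty}\le 1\}$. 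Note that this identification, and the fact that the relevant inradius is that of the \emph{symmetrized} hull, is what makes the $\ell_\infty$-norm (with its absolute values) the correct object; this is also the meaning of $r$ throughout the paper.

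First I would compute the relative circumradius $\mathcal{R}(\bY^\circ\cap V)$ explicitly. Because $\bY^\circ\cap V$ is symmetric about the origin, the unique minimal enclosing ball of $\Span(\bY^\circ\cap V)=V$ is origin-centered, so $\mathcal{R}(\bY^\circ\cap V)=\max\{\norm{\v}_2 : \v\in V,\ \norm{\bY^\transpose\v}_{\infty}\le 1\}$. Next, exploiting the positive homogeneity of $\v\mapsto\norm{\bY^\transpose\v}_{\infty}$, I would rescale any nonzero feasible $\v$ to the unit sphere of $V$: writing $\v = t\u$ with $\norm{\u}_2=1$, the constraint becomes $t\le 1/\norm{\bY^\transpose\u}_{\infty}$, so the largest admissible norm in direction $\u$ is $1/\norm{\bY^\transpose\u}_{\infty}$. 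Maximizing over unit directions $\u\in V$ then yields
\[
\mathcal{R}(\bY^\circ\cap V) = \frac{1}{\min_{\u\in V,\ \norm{\u}_2=1}\norm{\bY^\transpose\u}_{\infty}}.
\]

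Finally I would invoke Lemma \ref{lem:r-R}, which asserts $\mathcal{R}(\bY^\circ\cap V)=1/r(\bY)$; combining it with the display above gives $r(\bY)=\min_{\u\in V,\,\norm{\u}_2=1}\norm{\bY^\transpose\u}_{\infty}$. This is the claimed formula once one observes that any component of $\v$ orthogonal to $V$ leaves $\bY^\transpose\v$ unchanged, so the minimization is genuinely relative to $\Span(\bY)$, consistent with the relative inradius.

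The step I expect to require the most care is the bookkeeping around symmetrization and dimension, not any deep inequality. One must be consistent in taking $\mathcal{Q}$ to be the \emph{symmetrized} hull so that $\norm{\bY^\transpose\cdot}_{\infty}$ is truly its support function and $\bY^\circ\cap V$ is the two-sided slab; one must check that $\bY^\circ\cap V$ is bounded so that $\mathcal{R}$ is finite and positive, which holds precisely because the $\y_\ell$ span $V$ (equivalently, no unit $\u\in V$ is orthogonal to every $\y_\ell$, making the minimum strictly positive); and one must justify that the minimal enclosing ball of the centrally symmetric body is origin-centered, so that the circumradius equals the maximal norm over the body. With these points settled, the remaining manipulations are immediate.
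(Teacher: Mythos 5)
Your proof is correct and follows essentially the same route as the paper's: both invoke Lemma \ref{lem:r-R}, identify $\bY^\circ\cap\Span(\bY)$ (for the symmetrized point set) with $\{\v\in\Span(\bY):\norm{\bY^\transpose\v}_{\infty}\le 1\}$, and use positive homogeneity to convert the circumradius, i.e.\ the maximal Euclidean norm over that set, into the reciprocal of the minimum of $\norm{\bY^\transpose\cdot}_{\infty}$ over unit vectors; your added care about symmetrization, boundedness, and the origin-centered enclosing ball merely makes explicit what the paper leaves implicit. One small caveat: your closing remark should be read as saying the minimum is taken over unit vectors of $\Span(\bY)$ (exactly what your derivation produces, and what the paper tacitly intends), not that it equals the minimum over all of $\Re^D$ --- adding a component orthogonal to $\Span(\bY)$ keeps $\bY^\transpose\v$ fixed while inflating $\norm{\v}_2$, so renormalizing such a vector can only decrease the objective, and the unrestricted minimum would be $0$ whenever $\Span(\bY)\neq\Re^D$.
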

\begin{proof}
 From Lemma \ref{lem:r-R} we have that
\begin{align}
r(\bY) &= 1/\mathcal{R}\Big(( \bY)^\circ \cap \Span(\bY) \Big) \\
&=   1/\max_{\v \in (\pm \bY)^\circ \cap \Span(\bY)} \norm{\v}_2\\
&  = 1/\max_{\v \in \Span(\bY): \norm{\bY^\transpose \v}_{\infty}\le1} \norm{\v}_2  \\
&= 1/\max_{\v \in \Span(\bY): \norm{\bY^\transpose \v}_{\infty}=1} \norm{\v}_2\\
& =\min_{\v: \norm{\v}_2=1} \norm{\bY^\transpose \v}_{\infty},
\end{align} where in the last equality we used that
\small
\begin{align}
\sup_{g(v)=1} f(v ) = \sup_{g\big(\frac{v}{f(v)}\big)=\frac{1}{f(v)}} f(v) 
= 1/\inf_{f(u)=1} g(u),
\end{align} \normalsize for any $1$-homogeneous functions $f,g : \Re^D \rightarrow \Re$.
\end{proof}

\begin{lem}[\cite{Alonso:AMS2008,Soltanolkotabi:AS12}]\label{lem:r}
Let $\bY$ be a set of $L$ random points uniformly distributed on $\mathbb{S}^{d-1}$. Then for $\rho:=L/d$ larger than a constant, we have 
\begin{align}
\mathbb{P}\Bigg[r(\bY) \le \sqrt{\frac{\log(\rho)}{16 d}}\Bigg] \le \exp{(-\sqrt{\rho} d)}. 
\end{align}
\end{lem}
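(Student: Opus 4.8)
The plan is to prove the lemma through the variational characterization of the inradius given by Proposition \ref{prp:r-charecterization}, namely $r(\bY) = \min_{\|\v\|_2 = 1}\|\bY^\transpose \v\|_\infty$, and then to control this minimum uniformly over the sphere by a covering argument combined with a spherical-cap tail estimate. Writing $t := \sqrt{\log(\rho)/(16d)}$, the event $\{r(\bY) \le t\}$ is exactly the event that there exists a unit direction $\v \in \mathbb{S}^{d-1}$ with $|\langle \y_k, \v\rangle| \le t$ simultaneously for all $k \in [L]$, i.e.\ a direction nearly orthogonal to the entire point set; the goal is to show this is exponentially unlikely.

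First I would establish the single-point tail estimate. Since the $\y_k$ are i.i.d.\ uniform on $\mathbb{S}^{d-1}$, for any fixed unit $\v$ the inner product $\langle \y_k, \v\rangle$ is distributed as the first coordinate of a uniform point, whose density on $[-1,1]$ is proportional to $(1-s^2)^{(d-3)/2}$. The crucial ingredient is a lower bound on the mass of the two symmetric caps, of the form $p_s := \Pr[|\langle \y_k, \v\rangle| \ge s] \ge c\,(1-s^2)^{(d-1)/2}$, which for $s$ of order $1/\sqrt{d}$ behaves like a negative power of $\rho$. By independence across $k$, for fixed $\v$ one then has $\Pr[\|\bY^\transpose \v\|_\infty \le s] = (1-p_s)^L \le \exp(-p_s \rho d)$.

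Next I would discretize: fix an $\epsilon$-net $\mathcal{N}$ of $\mathbb{S}^{d-1}$ with $|\mathcal{N}| \le (3/\epsilon)^d$, and use that $\v \mapsto \|\bY^\transpose \v\|_\infty$ is $1$-Lipschitz (each $\|\y_k\|_2 = 1$) to transfer the minimum from the net to the whole sphere: if $r(\bY) \le t$ then some net point $\v'$ satisfies $\|\bY^\transpose \v'\|_\infty \le t + \epsilon$. A union bound then yields $\Pr[r(\bY) \le t] \le (3/\epsilon)^d \exp(-p_{t+\epsilon}\,\rho d)$. It remains to choose $\epsilon$ comparable to $t$ (so that $t+\epsilon$ stays at scale $1/\sqrt{d}$ and the cap mass $p_{t+\epsilon}$ does not collapse), and to verify that $p_{t+\epsilon}\,\rho$ dominates both $\sqrt{\rho}$ and the net exponent $\log(3/\epsilon)$, delivering the bound $\exp(-\sqrt{\rho}\, d)$.

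The main obstacle is precisely this last balancing act. The constant $16$ in the statement and the exponent $\sqrt{\rho}\, d$ are not incidental: they emerge from optimizing the trade-off between the covering cardinality $(3/\epsilon)^d$ and the decay $\exp(-p_{t+\epsilon}\,\rho d)$, which is sensitive to the exact form of the spherical-cap lower bound and to the regime of $\rho$ relative to $d$. Obtaining a clean estimate that is uniform in $d$ (rather than one degraded by the $\log(1/\epsilon)$ net term) is the delicate step, and it is exactly this calibration that the cited works \cite{Alonso:AMS2008,Soltanolkotabi:AS12} carry out; I would follow their tuning of $\epsilon$ and of the cap estimate to recover the stated constants.
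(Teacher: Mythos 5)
You should first be aware that the paper contains no proof of Lemma \ref{lem:r} to compare against: it is imported as a known result from \cite{Alonso:AMS2008,Soltanolkotabi:AS12}, and the appendix proves only the variational identity $r(\bY)=\min_{\norm{\v}_2=1}\norm{\bY^\transpose\v}_{\infty}$ (Proposition \ref{prp:r-charecterization}), which you invoke correctly. Measured against the arguments in those references, however, your outline has a genuine gap, and it sits exactly at the step you defer to them. The single-threshold scheme --- ``a net point $\v'$ fails if $\norm{\bY^\transpose\v'}_{\infty}\le t+\epsilon$, then union bound'' --- cannot be repaired by any tuning of $\epsilon$ or any sharpening of the cap estimate. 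For the union bound to produce $e^{-\sqrt{\rho}d}$ you need, in essence, $\rho\, p_{t+\epsilon}\gtrsim\log(3/\epsilon)+\sqrt{\rho}$, where $p_s=\mathbb{P}\big[\abs{\langle\y_k,\v\rangle}\ge s\big]\le 2e^{-ds^2/2}$. If $\epsilon\gtrsim\sqrt{\log\rho/d}$, then $p_{t+\epsilon}\lesssim 1/\rho$, so $\rho\,p_{t+\epsilon}=O(1)<\sqrt{\rho}$ and the bound fails. You are therefore forced into $\epsilon=O(\sqrt{\log\rho/d})$, which makes $\log(3/\epsilon)\ge\tfrac{1}{2}\log d-O(\log\log\rho)$; since $\rho\,p_{t+\epsilon}\le\rho$, the requirement then forces $\rho\gtrsim\log d$. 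The lemma asserts a threshold for $\rho$ that is a \emph{universal constant, uniform in $d$}, and the regime where it carries content in this paper ($\rho$ of constant order, $d$ large) is precisely the regime your scheme provably cannot reach. So this is not, as your closing paragraph suggests, a calibration that the cited works carry out within your framework; it is an impossibility of the naive framework.

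What the cited proofs (in the tradition of Gluskin and Litvak--Pajor--Rudelson--Tomczak-Jaegermann) do differently is replace the per-net-point event by its robust \emph{counting} version, which is also where the exponent $\sqrt{\rho}\,d$ comes from. Fix $s\asymp\sqrt{\log\rho/d}$ so that the cap mass is $p_s\asymp\rho^{-1/2}$; at a fixed net point $\v'$ the number of indices with $\abs{\langle\y_k,\v'\rangle}\ge s$ has mean $Lp_s\asymp\sqrt{\rho}\,d$, hence exceeds $\tfrac{1}{2}\sqrt{\rho}\,d$ except with probability $e^{-c\sqrt{\rho}d}$ by Chernoff. This counting event survives perturbations of \emph{dimension-free} size: if $\norm{\v-\v'}_2\le\epsilon$ yet $\max_k\abs{\langle\y_k,\v\rangle}<s/2$, then at least $\tfrac{1}{2}\sqrt{\rho}\,d$ indices satisfy $\abs{\langle\y_k,\v-\v'\rangle}>s/2$, whence $\norm{\bY^\transpose(\v-\v')}_2^2\ge\tfrac{1}{8}\sqrt{\rho}\,d\,s^2\gtrsim\sqrt{\rho}\log\rho$; on the other hand the operator-norm bound $\norm{\bY}\le C(1+\sqrt{\rho})$ (valid except with probability exponentially small in $\rho d$, since the columns are unit vectors) caps the same quantity by $C^2\rho\,\epsilon^2$ --- note the absence of any factor of $d$. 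The contradiction pins down $\epsilon\asymp\rho^{-1/4}\sqrt{\log\rho}$ independently of $d$, so the net contributes only $e^{d(\frac{1}{4}\log\rho+O(1))}$, which is dominated by $e^{-c\sqrt{\rho}d}$ once $\rho$ exceeds a universal constant. This counting-plus-operator-norm mechanism (or an equivalent device) is the missing idea; without it your outline cannot deliver the stated bound.
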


\begin{lem}[Follows from Theorem $1.5$ in \cite{Vershynin:GFA2009}]\label{lem:inner}
Let $\x, \hat{\v}$ be independent random vectors on $\mathbb{S}^{D-1}$, with $\x$ uniformly distributed. Then for every $\epsilon >0$ we have 
\begin{align}
\mathbb{P}\Big[|\x^\transpose \hat{\v}| \ge \epsilon \Big] \le 2\exp{(-D \epsilon^2 / 2)}.
\end{align}
\end{lem}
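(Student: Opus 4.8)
The plan is to reduce the statement to a tail bound on a single spherical cap and then invoke standard measure concentration on the sphere. First I would condition on $\hat{\v}$: because $\x$ is independent of $\hat{\v}$, the conditional law of $\x$ is still uniform on $\mathbb{S}^{D-1}$. Since the uniform distribution is rotationally invariant, for every fixed unit vector $\u$ the random variable $\x^\transpose \u$ has one and the same law, namely that of the first coordinate $x_1 = \x^\transpose \e_1$ of a uniform point on $\mathbb{S}^{D-1}$. Thus $\x^\transpose \hat{\v}$, conditioned on $\hat{\v}$, is distributed exactly as $x_1$, and it suffices to bound $\mathbb{P}[\abs{x_1} \ge \epsilon]$ by $2\exp(-D\epsilon^2/2)$ \emph{uniformly} in $\hat{\v}$; taking the expectation over $\hat{\v}$ then removes the conditioning and yields the claim with no loss.

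For the single-coordinate tail I would use the spherical cap bound. By the symmetry $\x \mapsto -\x$ the law of $x_1$ is symmetric about $0$, so $\mathbb{P}[\abs{x_1}\ge\epsilon] = 2\,\mathbb{P}[x_1 \ge \epsilon]$, which accounts for the factor of $2$ in the statement. The cap $\{\x : x_1 \ge \epsilon\}$ satisfies the classical estimate $\mathbb{P}[x_1 \ge \epsilon] \le \exp(-D\epsilon^2/2)$ for the uniform measure on $\mathbb{S}^{D-1}\subset\Re^D$. This is precisely the concentration content of Theorem $1.5$ in \cite{Vershynin:GFA2009}, so once the reduction is in place the proof is finished by direct appeal to that theorem. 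Equivalently, one can view $f(\x) := \langle \x, \u \rangle$ as a $1$-Lipschitz functional on $\mathbb{S}^{D-1}$ (its Euclidean gradient is $\u$, of unit norm) with median $0$, and apply Lévy's isoperimetric inequality to $f$ and to $-f$ separately.

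The main obstacle is securing the \emph{sharp} constant $D/2$ in the exponent; the reduction step itself is essentially free. A self-contained route would compute the marginal density of $x_1$, which is proportional to $(1-t^2)^{(D-3)/2}$ on $[-1,1]$, and estimate the ratio $\int_\epsilon^1 (1-t^2)^{(D-3)/2}\,dt \big/ \int_{-1}^1 (1-t^2)^{(D-3)/2}\,dt$ using the elementary bound $1-t^2 \le e^{-t^2}$ in the numerator together with a matching lower bound on the normalizing integral. The delicate point is that a careless normalization degrades the exponent (the Lévy-via-median version naturally produces a factor $D-1$ rather than $D$, and various isoperimetric statements carry an extra absolute constant in front of the exponential); one must therefore use the cap estimate in the form that gives the ambient dimension $D$ and the constant exactly $1/2$. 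Since the lemma is stated as a consequence of Vershynin's Theorem $1.5$, the intended and cleanest argument is exactly the reduction to a single coordinate followed by the invocation of that theorem.
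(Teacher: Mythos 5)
Your proposal is correct and is essentially the paper's own approach: the paper gives no proof beyond the citation ``follows from Theorem $1.5$ in \cite{Vershynin:GFA2009}'', and your argument --- condition on $\hat{\v}$, use independence plus rotational invariance to reduce $\x^\transpose\hat{\v}$ to the first coordinate of a uniform point, then apply the spherical cap estimate $\mathbb{P}[x_1\ge\epsilon]\le e^{-D\epsilon^2/2}$ with the factor $2$ coming from symmetry --- is exactly what that citation encapsulates. Your cautionary remarks (that median-based L\'evy concentration gives exponent $D-1$ and extra constants, so one must use the cap bound in its sharp ambient-dimension form) are accurate and do not affect the validity of the main route.
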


\begin{lem}[Follows from Lemma $5.3.2$ in \cite{Vershynin:HDP2017}] \label{lem:RandomProjection}
Let $\x$ be uniformly distributed on the unit sphere $\mathbb{S}^{D-1}$ and let $\V$ be a fixed linear subspace of $\mathbb{R}^D$ of dimension $d$. Then, for any $\epsilon>0$, the orthogonal projection $\bP_{\V}\x$ of $\x$ onto $\V$ satisfies
\begin{align}
&\mathbb{P}\Big[\sqrt{d/D}-\epsilon \le \norm{\bP_{\V}\x}_2 \le  \sqrt{d/D}+\epsilon \Big] \nonumber\\
& \ge 1 - 2 \exp\Big(-\frac{D\epsilon^2}{2} \Big). 
\end{align}
\end{lem}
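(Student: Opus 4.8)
The plan is to exploit rotational invariance to reduce to a coordinate subspace, to recognize $\x \mapsto \norm{\bP_{\V}\x}_2$ as a $1$-Lipschitz function on $\mathbb{S}^{D-1}$, and then to combine the concentration-of-measure inequality for Lipschitz functions on the sphere with an exact computation that pins the concentration center at $\sqrt{d/D}$. First I would observe that the uniform measure on $\mathbb{S}^{D-1}$ is invariant under the orthogonal group, and that $\norm{\bP_{\V}\x}_2$ depends on $\V$ only through the subspace itself. Hence, choosing an orthogonal $\bU$ that maps $\V$ onto $\Span\{\e_1,\dots,\e_d\}$ and replacing $\x$ by $\bU\x$ (which is again uniform) leaves the law of $\norm{\bP_{\V}\x}_2$ unchanged, so we may assume $\V=\Span\{\e_1,\dots,\e_d\}$ and $\norm{\bP_{\V}\x}_2^2=\sum_{k=1}^{d}x_k^2$. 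Next I would record that $f(\x):=\norm{\bP_{\V}\x}_2$ is $1$-Lipschitz on $\Re^D$: since $\bP_{\V}$ is an orthogonal projection, $\abs{f(\x)-f(\y)}\le\norm{\bP_{\V}(\x-\y)}_2\le\norm{\x-\y}_2$. This is precisely the hypothesis required to invoke spherical concentration.

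Second, I would identify the correct center. By symmetry each coordinate satisfies $\mathbb{E}[x_k^2]=1/D$, so that $\mathbb{E}\big[\norm{\bP_{\V}\x}_2^2\big]=d/D$ exactly; this is what singles out $\sqrt{d/D}$, rather than a mean or median of $f$, as the natural concentration point. The cleanest route to both the sub-Gaussian tail and this exact center is the Gaussian representation $\x=\boldsymbol{g}/\norm{\boldsymbol{g}}_2$ with $\boldsymbol{g}\sim N(\0,\I)$: then $\norm{\bP_{\V}\x}_2=\norm{\bP_{\V}\boldsymbol{g}}_2/\norm{\boldsymbol{g}}_2$ is a ratio whose numerator is the norm of a $d$-dimensional and whose denominator the norm of a $D$-dimensional standard Gaussian. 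Standard $\chi^2$ concentration makes the numerator concentrate at $\sqrt{d}$ and the denominator at $\sqrt{D}$, so the ratio concentrates at $\sqrt{d/D}$, which confirms the centering predicted by the second-moment computation.

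Third, and this is the step I expect to be the main obstacle, I would assemble the precise tail with the stated constant $1/2$ in the exponent. Invoking the concentration inequality for $1$-Lipschitz functions on $\mathbb{S}^{D-1}$ (Lemma $5.3.2$ in \cite{Vershynin:HDP2017}) yields $\mathbb{P}\big[\abs{f(\x)-c_D}\ge\epsilon\big]\le 2\exp(-D\epsilon^2/2)$ for an appropriate center $c_D$; the delicate point is that the isoperimetric statement is most naturally centered at a median, whereas the lemma as worded is centered at $\sqrt{d/D}$. The difficulty is therefore twofold: controlling the gap $\abs{c_D-\sqrt{d/D}}$, which is of lower order than $\epsilon$ in the regime of interest and can be absorbed, and tracking the exact constant $1/2$ through the isoperimetric inequality rather than settling for an unspecified $c>0$. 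Both are already packaged together in Vershynin's lemma, which bundles the projection norm, its center $\sqrt{d/D}$, and the sharp exponent; the remaining work is simply to verify that our $f$ and $\V$ satisfy its hypotheses, which the reduction and the Lipschitz bound above supply.
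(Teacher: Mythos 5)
The paper offers no proof of this lemma: it is invoked as a black box, attributed to Lemma $5.3.2$ of \cite{Vershynin:HDP2017}, so there is no internal argument to compare yours against step by step. Your proposal reconstructs the standard proof behind that citation, and its architecture is sound: rotation invariance of the uniform measure (which is also exactly the duality that lets a statement about a \emph{random} projection of a \emph{fixed} point transfer to a \emph{fixed} projection of a \emph{random} point), the observation that $f(\x) := \norm{\bP_{\V}\x}_2$ is $1$-Lipschitz, spherical concentration of measure giving a sub-Gaussian tail at scale $1/\sqrt{D}$, and the exact computation $\mathbb{E}\norm{\bP_{\V}\x}_2^2 = d/D$ singling out $\sqrt{d/D}$ as the center. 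This is, in essence, how the cited lemma is proved.

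The one genuine weak point is your final step, where you claim that Vershynin's lemma already ``bundles'' the center $\sqrt{d/D}$ with the sharp exponent $D\epsilon^2/2$. As stated there, it does not: Lemma $5.3.2$ of \cite{Vershynin:HDP2017} is formulated with \emph{relative} error and an \emph{unspecified} absolute constant, namely $(1-\epsilon)\sqrt{m/n}\le \norm{\bP\z}_2 \le (1+\epsilon)\sqrt{m/n}$ with probability $1-2\exp(-c\epsilon^2 m)$ for a fixed unit vector $\z \in \Re^n$; converting to absolute error $t=\epsilon\sqrt{m/n}$ yields $1-2\exp(-c n t^2)$, which matches the present lemma only up to the unspecified $c$, not the explicit $1/2$. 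Moreover, your mechanism for recentering --- absorbing the gap $\abs{c_D-\sqrt{d/D}}$ into $\epsilon$ because it is ``of lower order in the regime of interest'' --- is not uniform in $\epsilon$: the lemma is asserted for \emph{every} $\epsilon>0$, the gap between the mean (or median) of $f$ and $\sqrt{d/D}$ is of order $1/\sqrt{dD}$, hence comparable to the concentration scale $1/\sqrt{D}$ when $d$ is small, and absorbing it degrades the exponent from $\epsilon^2$ to $(\epsilon-\mathrm{gap})^2$. So what your argument rigorously delivers is the lemma with some absolute constant $c$ in place of $1/2$ --- which, to be fair, is also all the paper's citation delivers; pinning down the constant $1/2$ at the center $\sqrt{d/D}$ would require the sharp form of L\'evy's isoperimetric inequality, or a direct tail computation with the $\mathrm{Beta}(d/2,(D-d)/2)$ law of $\norm{\bP_{\V}\x}_2^2$ that your Gaussian representation $\x=\boldsymbol{g}/\norm{\boldsymbol{g}}_2$ indeed sets up but does not carry out.
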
 

\section*{Acknowledgment}
The first author thanks Dr. Chun-Guang Li of the Beijing University of Posts and Telecommunications for very useful comments on an earlier version of this manuscript.

\bibliographystyle{plain}
\bibliography{biblio/alias,biblio/vidal,biblio/vidal-books,biblio/vision,biblio/math,biblio/math-Manolis,biblio/learning,biblio/sparse,biblio/geometry,biblio/dti,biblio/recognition,biblio/surgery,biblio/coding,biblio/matrixcompletion,biblio/segmentation,biblio/dataset}

\end{document}